\def\eqref#1{equation~\ref{#1}}
\def\1{\bm{1}}
\def\vm{{\bm{m}}}
\def\vn{{\bm{n}}}
\def\vx{{\bm{x}}}
\DeclareMathAlphabet{\mathsfit}{\encodingdefault}{\sfdefault}{m}{sl}
\SetMathAlphabet{\mathsfit}{bold}{\encodingdefault}{\sfdefault}{bx}{n}
\def\gD{{\mathcal{D}}}
\def\gM{{\mathcal{M}}}
\def\gW{{\mathcal{W}}}
\def\gX{{\mathcal{X}}}
\def\sN{{\mathbb{N}}}
\def\sR{{\mathbb{R}}}
\newcommand{\Var}{\mathrm{Var}}
\newcommand{\ReLU}{\mathrm{ReLU}}
\newcommand{\PolyReLU}{\mathrm{PolyReLU}}
\newcommand{\Poly}{\mathrm{Poly}}
\newcommand{\PolyNorm}{\mathrm{PolyNorm}}
\newcommand{\FFN}{\mathrm{FFN}}
\def\ie{\textit{i.e.,~}}
\def\etal{\textit{et al.~}}
\newtheorem{theorem}{Theorem}
\newtheorem{corollary}{Corollary}[theorem]
\newtheorem{lemma}{Lemma}
\newtheorem{definition}{Definition}
\let\@algcomment\relax
\newcommand\algcomment[1]{\def\@algcomment{\footnotesize#1}}
\renewcommand\fs@ruled{\def\@fs@cfont{\bfseries}\let\@fs@capt\floatc@ruled
  \def\@fs@pre{\hrule height.8pt depth0pt \kern2pt}%
  \def\@fs@post{}%
  \def\@fs@mid{\kern2pt\hrule\kern2pt}%
  \let\@fs@iftopcapt\iftrue}
\title{Polynomial Composition Activations: Unleashing the Dynamics of Large Language Models}
\def\myspace{\hskip0.7\fontdimen6\font}
\author{Zhijian Zhuo\textsuperscript{1,2}\thanks{Equal contribution. \\$^{\quad\ \dagger}$Corresponding authors: Yutao Zeng (yutao.zeng@outlook.com) and Jinwen Ma (jwma@math.pku.edu.cn).}
\myspace
Ya Wang\textsuperscript{2}$^{*}$
\myspace
Yutao Zeng\textsuperscript{2}$^{\dagger}$
\myspace
Xiaoqing Li\textsuperscript{3}
\myspace
Xun Zhou\textsuperscript{2}
\myspace
Jinwen Ma\textsuperscript{1}$^{\dagger}$
\\
\textsuperscript{1} School of Mathematical Sciences, Peking University\\
\textsuperscript{2} Seed-Foundation-Model, ByteDance \\
\textsuperscript{3} Capital University of Economics and Business
}
\begin{document}

\maketitle

\begin{abstract}

Transformers have found extensive applications across various domains due to their powerful fitting capabilities. This success can be partially attributed to their inherent nonlinearity. Thus, in addition to the ReLU function employed in the original transformer architecture, researchers have explored alternative modules such as GELU and SwishGLU to enhance nonlinearity and thereby augment representational capacity. In this paper, we propose a novel category of polynomial composition activations (PolyCom), designed to optimize the dynamics of transformers. Theoretically, we provide a comprehensive mathematical analysis of PolyCom, highlighting its enhanced expressivity and efficacy relative to other activation functions. Notably, we demonstrate that networks incorporating PolyCom achieve the \textbf{\textit{optimal approximation rate}}, indicating that PolyCom networks require minimal parameters to approximate general smooth functions in Sobolev spaces. We conduct empirical experiments on the pre-training configurations of large language models (LLMs), including both dense and sparse architectures. By substituting conventional activation functions with PolyCom, we enable LLMs to capture higher-order interactions within the data, thus improving performance metrics in terms of accuracy and convergence rates.  Extensive experimental results demonstrate the effectiveness of our method, showing substantial improvements over other activation functions. Code is available at \url{https://github.com/BryceZhuo/PolyCom}.
\end{abstract}

\begin{figure}[th]
\centering
\vspace{-0.2cm}
\includegraphics[width=\linewidth]{./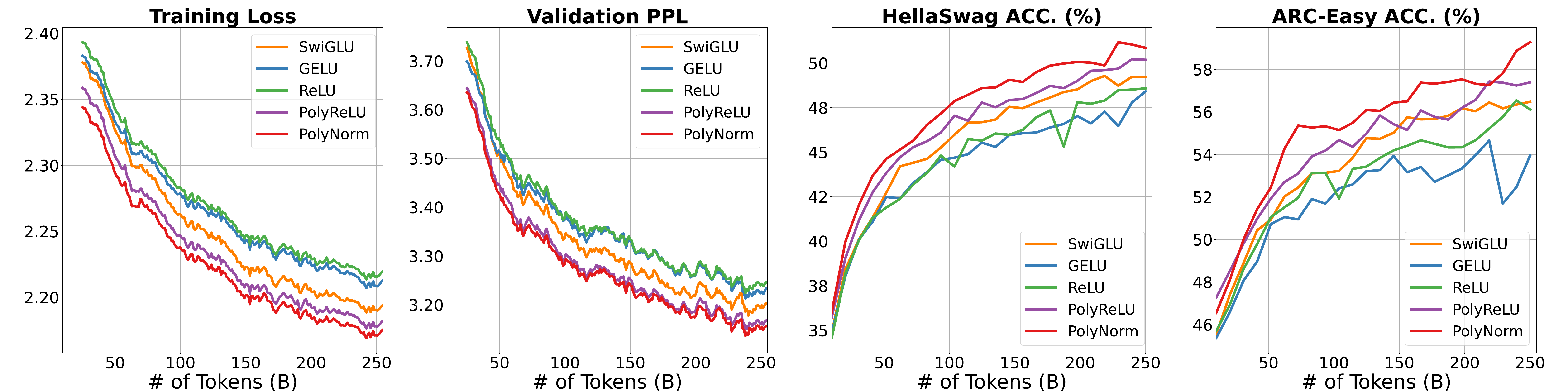}
\caption{Training loss, validation perplexity (PPL), and downstream performance of 1B dense models. We compare models employing different activation functions, including SwiGLU, GELU, ReLU, PolyReLU, and PolyNorm. It indicates that models using PolyReLU and PolyNorm exhibit lower training loss and validation PPL, alongside better downstream performance.
}
\label{fig:overall results for dense model}
\end{figure}

\section{Introduction}

Transformers \citep{vaswani2017attention} have revolutionized the field of deep learning, facilitating unprecedented advancements in natural language processing \citep{radford2019language,zeng-etal-2020-event,li-etal-2024-knowcoder,zuo2024alignxie}, computer vision \citep{dosovitskiy2021vit,wang2022anchor}, and beyond \citep{dong2018speech,arnab2021vivit,wang2020multi}. Characterized by their attention mechanisms, transformers excel at capturing intricate relationships within data, making them indispensable in contemporary machine learning applications. However, despite their widespread success, there remain opportunities for further refinement, particularly concerning the selection of activation functions. The activation function plays a crucial role in determining the output of each neuron within a neural network. Traditionally, simple nonlinearities such as Rectified Linear Unit (ReLU) \citep{nair2010rectified} and its variants \citep{hendrycks2016gaussian,krotov2016dense, li2019better,so2021searching} have been favored due to their computational efficiency and ease of implementation. Although effective, these activation functions are inherently limited in their ability to model complex higher-order relationships within data. This limitation can be particularly restrictive in transformer architectures, where the ability to capture subtle and complex dependencies is essential.

In this paper, we introduce a novel category of polynomial composition activation functions (\textbf{PolyCom}), specifically engineered to enhance the performance of transformer architectures. In contrast to conventional activation functions, which are predominantly linear or piecewise linear, polynomial composition activations facilitate the modeling of more complex patterns within data. This augmentation in the activation function's expressiveness endows the model with superior expressive capacity, enabling it to capture higher-order interactions that might otherwise be neglected. Unlike other forms of polynomials (\citep{hornik1989multilayer, trefethen2019approximation}) that suffer from inadequate approximation, exploding values, and oscillatory behavior, we demonstrate that PolyCom possesses a more potent expressive capability than both ReLU and traditional polynomials and achieves optimal approximation within Sobolev space.

We posit that the integration of polynomial composition activations within transformer models can lead to enhanced performance in tasks requiring intricate data interpretation. To evaluate this hypothesis, we conducted comprehensive experiments on the pre-training configurations of large language models (LLMs), including both dense and sparse architectures. These evaluations were performed across various benchmarks, assessing the performance of transformers employing polynomial composition activations in comparison to those utilizing traditional activation functions. The results indicate that the proposed method not only improves the model accuracy but also accelerates convergence rates, thereby suggesting that polynomial composition activations provide a substantive advantage in deep learning applications.

The main contributions of this paper are summarized in the following.
\begin{itemize}
    \item We propose a new activation function PolyCom which is a composition of the polynomial and other types of function. In particular, we introduce two instances of PolyCom: PolyReLU and PolyNorm, and detail its integration into the transformer architecture.
    \item Theoretically, we derive bounds on the number of trainable parameters required for PolyReLU networks to approximate ReLU networks, and vice versa. Additionally, we show that a PolyReLU network of size $O(\epsilon^{-d/n})$ can approximate any function in Sobolev spaces with error tolerance $\epsilon$, achieving \textbf{\textit{optimal approximation rates}}.
    \item Empirically, we validate the effectiveness of this new activation function on LLMs with both 1B dense models and MoE models with 1B active and 7B total parameters. The results of both models demonstrate that PolyCom can accelerate the converging speed and significantly outperform SwiGLU, GELU, and ReLU \etal 
\end{itemize}

The outline of this paper is structured as follows: In Section \ref{sec:method}, we present the mathematical formulation of PolyCom and discuss its integration within transformer architectures. Section \ref{sec:theoretical analysis} delivers a comprehensive theoretical analysis of PolyCom, emphasizing its enhanced expressivity and effectiveness. In Section \ref{sec:experiments}, we provide a detailed account of our experimental results involving large language models (LLMs). Section \ref{sec:related work} provides an overview of related work in the field of activation functions and their applications in transformer models. Finally, we conclude the paper and outline potential directions for future research.

\section{Polynomial Composition Activation Function} \label{sec:method}
In this section, we present the mathematical formulation of the polynomial composition activation function (PolyCom) and detail its integration into the transformer architecture. 

\textbf{PolyCom.} The study of the polynomial activation function can be traced back to the seminal work of \cite{hornik1989multilayer}, which showed that neural networks with polynomial activation are not dense within the space of continuous functions. Additionally, empirical evidence has shown that deep neural networks employing pure polynomial activations tend to underperform \citep{trefethen2019approximation}. To overcome these limitations, we propose PolyCom, a novel composition of polynomials and other functions. Specifically, we explore two composition approaches
\begin{equation}
    \begin{cases}
             \text{Type I:} & x \mapsto \sum_{i=0}^{r} a_i \rho^i(x),  \\
            \text{Type II:} & x \mapsto \sum_{i=0}^{r} a_i \rho(x^i),
    \end{cases} \\
    \quad a_i \in \sR,
\end{equation}
where $r \in \sN$ denotes the order of PolyCom and $\rho$ represents an arbitrary function such as ReLU, PReLU, Sigmoid, SiLU, or normalization. The key distinction between the two approaches lies in whether the function is composed before or after the power operation. The distinction between the two approaches lies in whether the composition or the power is performed first. In the practical implementation of PolyCom, we use 3-order PolyCom ($r=3$) with trainable coefficients $a_i$. For initialization, we set $a_i=1/r$ for $i=1,2,\dots,r$ and $a_0=0$. Our experiments on large language models (LLMs) show that 3-order PolyCom can indeed achieve extraordinary performance.

For Type I PolyCom, we specifically consider a composition involving the ReLU function due to its simplicity, which we term PolyReLU. An $r$-order PolyReLU is defined as
\begin{equation}
    \PolyReLU(x) = \sum_{i=0}^{r} a_i \ReLU^i(x),
\end{equation}
where $\ReLU^i(x)=\max\{x,0\}^i$. This formulation can be seen as an extension of both ReLU and square ReLU.

For Type II PolyCom, we introduce PolyNorm, which normalizes the powers to ensure consistent magnitudes across terms
\begin{equation}
    \PolyNorm(\vx) = \sum_{i=0}^{r} a_i \frac{\vx^i}{\|\vx^i\|_2},
\end{equation}
where $\vx^i =[x_1^i,x_2^i,\cdots,x_d^i]^\top$ represents element-wise exponentiation, and $\|\cdot\|_2$ denotes the $L_2$ normalization. PolyNorm incorporates normalization operators to rescale different powers into a manageable range, thereby preventing excessively large or small values. This makes the training procedures more stable.

\begin{figure}[t]
\centering
\includegraphics[width=\linewidth]{./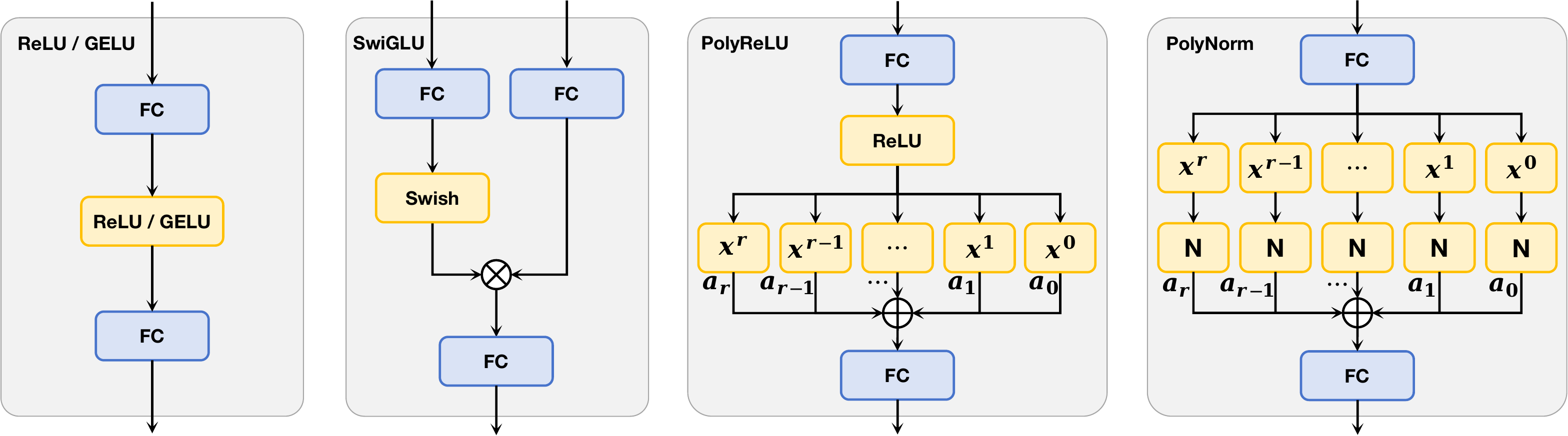}
\caption{Block diagrams of Transformer MLP blocks utilizing ReLU/GELU, SwiGLU, PolyReLU and PolyNorm. ``FC'' stands for Fully Connected layer. ``$x^i$'' represents the $i$-th power of the input tensor $x$, ``$a_j$'' denotes the $j$-th element of the learnable weight vector $a$, ``N'' indicates a normalization operation.}
\label{fig:framework}
\end{figure}

\textbf{Integration into Transformer.}
The transformer architecture \citep{vaswani2017attention} consists of two alternating modules, Multi-Head Attention (MHA) and position-wise Feed-Forward Networks (FFN). Activation functions predominantly influence the performance of FFN layers. We begin by formalizing the common paradigm of FFN,
\begin{equation}
    \FFN_{\rho}(\vx) = \rho(\vx W_1) W_2,
\end{equation}
where $\rho$ represents the activation function such as ReLU, GELU, PolyReLU, and PolyNorm. We replace the traditional activation function with our proposed PolyCom variants to enhance model capacity and performance, as illustrated in Figure \ref{fig:framework}.

\section{Theoretical Analysis} \label{sec:theoretical analysis}
From Figure \ref{fig:overall results for dense model}, one can see that the expressivity of PolyNorm is greater than or equal to that of PolyReLU.
To streamline the analysis, we focus solely on the theoretical properties of PolyReLU, specifically its expressivity and effectiveness. Additional, nonlinear activations such as GELU and SwiGLU can be locally approximated by Taylor polynomials around the origin, which allows us to primarily compare PolyReLU with ReLU and polynomial activations. To avoid confusion, we refer to networks that use ReLU activations as ReLU networks, and those that use PolyReLU activations as PolyReLU networks.

\subsection{Approximating ReLU Networks by PolyReLU}
In this subsection, we present theoretical results on approximating ReLU networks using PolyReLU networks. The following lemma shows that ReLU, ReLU$^2$, and polynomial activation are special cases of PolyReLU activation, highlighting the superior expressivity of PolyReLU. This implies that PolyReLU has stronger approximation abilities with fewer trainable parameters compared to ReLU and other polynomial activations.

\begin{lemma}\label{lem:degradation cases}
    $\ReLU$, $\ReLU^2$ and polynomial activation can be represented by $\PolyReLU$.
\end{lemma}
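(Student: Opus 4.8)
The plan is to exhibit, for each of the three target activations, an explicit choice of order $r$ and coefficients $a_0,\dots,a_r$ so that the definition $\PolyReLU(x)=\sum_{i=0}^{r}a_i\ReLU^i(x)$ reproduces the activation exactly. Since this is a pure representability statement (not an approximation), each case reduces to matching coefficients, so I would organize the proof as three short verifications.

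First I would dispatch the two easy cases. For $\ReLU$, take $r=1$ with $a_0=0$, $a_1=1$, $a_{\geq 2}=0$, giving $\PolyReLU(x)=\ReLU(x)$ immediately. For $\ReLU^2$, take $r=2$ with $a_2=1$ and all other $a_i=0$, so that $\PolyReLU(x)=\ReLU^2(x)$. These follow directly from the definition $\ReLU^i(x)=\max\{x,0\}^i$, with no computation beyond reading off the coefficients.

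The substantive case is the polynomial activation $p(x)=\sum_{i=0}^{r}c_i x^i$. The obstacle here is that $\ReLU^i(x)=x^i$ only on the half-line $x\geq 0$; on $x<0$ we have $\ReLU^i(x)=0$, so a single sum of powers of $\ReLU$ cannot match $x^i$ on both halves of the real line. The clean fix is to use the identity $x=\ReLU(x)-\ReLU(-x)$, equivalently to split $x$ into its positive and negative parts. Concretely, I would write $x^i=\bigl(\ReLU(x)+(-1)^i\ReLU(-x)\bigr)^i$ restricted appropriately, but the tidier route is to observe that every monomial $x^i$ can be expressed as a finite linear combination of $\ReLU^j(x)$ and $\ReLU^j(-x)$ over $j\le i$; summing over $i$ then represents the whole polynomial. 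If the theorem's notion of ``represented by $\PolyReLU$'' permits a two-term construction $\PolyReLU(x)+\PolyReLU(-x)$ (or a small network combining two PolyReLU units), this is straightforward; if it insists on a single PolyReLU applied to $x$ alone, then one can only represent polynomials that agree with a nonnegative-power expansion on $x<0$, and the statement should be read as representability up to such a combination. I would state explicitly which convention I am using and then verify the coefficient matching.

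I expect the main obstacle to be precisely this sign issue in the polynomial case: handling $x<0$ forces either the $\ReLU(x)-\ReLU(-x)$ decomposition or an appeal to a single-layer PolyReLU network rather than a lone activation. Everything else is bookkeeping. To close the argument I would assemble the three cases and remark that, because ReLU, squared ReLU, and arbitrary polynomials all arise as special instances, any network built from these activations is subsumed by a PolyReLU network with no increase in the number of layers and at most a constant-factor change in width, which is exactly the ``stronger approximation with fewer parameters'' claim flagged in the surrounding text.
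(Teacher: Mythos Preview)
Your approach is essentially the same as the paper's: the two easy cases are handled identically, and for the polynomial case the paper uses exactly the two-unit decomposition you anticipated, writing $\Poly(x)=\PolyReLU_1(x)+\PolyReLU_2(-x)$. The only sharpening is that the paper nails down the monomial identity precisely as $x^i=\ReLU^i(x)+(-1)^i\ReLU^i(-x)$ (valid because $\ReLU(x)$ and $\ReLU(-x)$ have disjoint supports, so no cross terms arise), rather than the looser ``linear combination of $\ReLU^j(x)$ and $\ReLU^j(-x)$ over $j\le i$'' you mention; your hedging about which convention is in force is well-placed, and the paper resolves it in favor of the two-term combination without further comment.
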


Building on Lemma \ref{lem:degradation cases}, we can formally prove that any ReLU network can be exactly represented by a PolyReLU network of the same size, as stated in the following theorem.

\begin{theorem}\label{thm:ployrelu2relu}
Let $f:[-1,1]^d \rightarrow [-1,1]$ be a ReLU network with depth $L$ and width $K$. Then, there exists a PolyReLU network $g:[-1,1]^d \rightarrow [-1,1]$ of size $O(LK)$
such that 
\begin{equation}
    f(\vx) = g(\vx), \quad \text{for } \forall \vx \in [-1,1]^d.
\end{equation}

\end{theorem}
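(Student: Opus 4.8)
The plan is to exploit the fact, established in Lemma~\ref{lem:degradation cases}, that $\ReLU$ is itself a degenerate case of $\PolyReLU$. Concretely, fixing the coefficients $a_1 = 1$ and $a_i = 0$ for all $i \neq 1$ (together with $a_0 = 0$) yields $\PolyReLU(x) = a_1 \ReLU(x) = \ReLU(x)$ for every $x$. Thus a single $\PolyReLU$ unit can reproduce a single $\ReLU$ unit \emph{exactly}, with no approximation error and without introducing any auxiliary neurons. The whole argument is then a layerwise substitution rather than a genuine approximation.

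First I would write the given ReLU network in its canonical layered form, $f = A_L \circ \sigma \circ A_{L-1} \circ \cdots \circ \sigma \circ A_1$, where each $A_\ell$ is an affine map, $\sigma$ denotes the coordinatewise application of $\ReLU$, the depth is $L$, and each hidden layer has width at most $K$. Next I would construct the candidate network $g$ by keeping every affine map $A_\ell$ unchanged and replacing each coordinatewise $\ReLU$ by a coordinatewise $\PolyReLU$ whose coefficients are frozen at the degenerate values above. Using the pointwise identity $\PolyReLU = \ReLU$ at each neuron, an induction on the layer index $\ell$ shows that the pre-activations and post-activations of $g$ coincide with those of $f$ layer by layer, so that $g(\vx) = f(\vx)$ for all $\vx \in [-1,1]^d$; in particular $g$ maps into $[-1,1]$ precisely because $f$ does.

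Finally I would account for the size. Since the construction is a one-to-one substitution that leaves all affine maps and all layer widths intact, $g$ inherits depth $L$ and width $K$ from $f$, and hence has size $O(LK)$, matching the claimed bound. Each substituted activation carries only the $r+1$ fixed coefficients of a $\PolyReLU$, and because $r$ is a constant this adds at most $O(LK)$ parameters in total, so the order of the size is unaffected.

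I do not expect a real obstacle here: the statement is an exact representation result that follows almost immediately from Lemma~\ref{lem:degradation cases}. The only point requiring care is bookkeeping rather than mathematics, namely making the notion of ``size'' precise and confirming that endowing each activation with trainable $\PolyReLU$ coefficients does not inflate the parameter count beyond $O(LK)$; the constant-order contribution per neuron noted above settles this.
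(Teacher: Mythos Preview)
Your proposal is correct and mirrors the paper's own proof almost verbatim: the paper also invokes Lemma~\ref{lem:degradation cases} to represent $\ReLU$ exactly as a $\PolyReLU$ activation, then substitutes each $\ReLU$ in $f$ by this degenerate $\PolyReLU$ to obtain $g$ with identical structure and size $O(LK)$. Your write-up is in fact more explicit about the layerwise induction and the size bookkeeping than the paper's three-sentence argument, but the underlying idea is the same.
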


This theorem, proved in Appendix \ref{sec:proofs}, shows that PolyReLU networks can exactly match the representational power of ReLU networks without increasing the model size.

\subsection{Approximating PolyReLU with ReLU networks}

In this part, we give theoretical results on approximating PolyReLU networks using ReLU networks.
The following Lemma \ref{lem:approximate ployrelu} demonstrates that the PolyReLU activation can be approximated by a ReLU network within a given error tolerance.

\begin{lemma}\label{lem:approximate ployrelu}
    For the fixed positive integer $r$ and the activation $\PolyReLU(x) = \sum_{i=0}^{r} a_i \ReLU^i(x), x \in[-1,1]$ with $a_i\in[-1,1]$. Given any $\epsilon \in (0,1)$, there exists a ReLU network $f:[-1,1] \rightarrow [-1,1]$ with size $O(\ln^2(1/\epsilon))$, such that 
    \begin{equation}
        \max_{x \in [-1,1]} |f(x)-\PolyReLU(x)| < \epsilon.
    \end{equation}
\end{lemma}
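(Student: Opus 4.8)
The plan is to reduce the statement to the classical fact that the squaring map admits a logarithmically sized ReLU approximation, and then to assemble the powers $\ReLU^i$ from it while controlling how the approximation errors accumulate. First I would expose a single nonnegative variable: since $\ReLU^i(x)=(\ReLU(x))^i$, we have $\PolyReLU(x)=a_0+\sum_{i=1}^{r}a_i(\ReLU(x))^i$, and a single ReLU unit computes $y:=\ReLU(x)\in[0,1]$ for every $x\in[-1,1]$. Hence it suffices to build a ReLU network approximating the polynomial $q(y)=\sum_{i=1}^{r}a_i y^i$ on $[0,1]$ to error $\epsilon$; composing the result with the ReLU unit and adding the constant $a_0$ costs only $O(1)$ parameters and leaves the error unchanged.

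The core gadget is the squaring network. I would invoke the standard telescoping construction: if $g$ is the tent map on $[0,1]$ and $g_s$ its $s$-fold self-composition, then $y^2=y-\sum_{s\ge 1}4^{-s}g_s(y)$ on $[0,1]$, and truncating after $m=O(\ln(1/\delta))$ terms gives a ReLU network $\mathrm{sq}_\delta$ of size $O(\ln(1/\delta))$ with $\sup_{[0,1]}|\mathrm{sq}_\delta(y)-y^2|\le\delta$. From the polarization identity $uv=\tfrac14[(u+v)^2-(u-v)^2]$ (rescaling $u+v\in[0,2]$ into the unit interval before squaring) I then obtain an approximate product gate on $[0,1]^2$ of size $O(\ln(1/\delta))$ and error $O(\delta)$.

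Next I would assemble $y^2,y^3,\dots,y^r$ by at most $r-1$ applications of the product gate — a fixed number since $r$ is constant — clamping each approximate intermediate power back into $[0,1]$ with the two-unit gadget $c(y)=\ReLU(y+1)-\ReLU(y-1)-1$ before it feeds the next multiplication. Because the exact intermediate values lie in $[0,1]$ and the product gate is Lipschitz there, the error of the $i$-th approximate power is $O(i\delta)$, so the weighted combination $\sum_{i=1}^{r}a_i\widetilde{y^i}$ with $|a_i|\le 1$ has total error $O(r^2\delta)$. Taking $\delta=\Theta(\epsilon/r^2)$ forces this below $\epsilon$; summing the $O(r)$ sub-network sizes, each $O(\ln(1/\delta))=O(\ln(1/\epsilon))$, together with the cumulative precision required along the composition, yields the claimed total size $O(\ln^2(1/\epsilon))$. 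A final application of $c$ enforces the codomain $[-1,1]$ without increasing the error, since the target already lies in that range.

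The step I expect to be the main obstacle is the error-propagation bookkeeping rather than any individual construction: each product gate is only approximately correct and receives inputs that are themselves approximations, so I must argue that the Lipschitz constants on $[0,1]$ keep the accumulated error linear in the number of gates (hence controllable by a single per-gate tolerance $\delta$), and that clamping the possibly out-of-range intermediate powers back to $[0,1]$ prevents the squaring gadget from being evaluated outside its guaranteed region. Making this accounting uniform is the delicate part needed to certify the $O(\ln^2(1/\epsilon))$ size bound.
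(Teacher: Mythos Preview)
Your proposal is correct and shares the paper's opening move: write $\PolyReLU(x)=\Poly(\ReLU(x))$ with $\Poly(y)=\sum_{i=0}^r a_i y^i$, precompose with a single ReLU unit, and reduce to approximating $\Poly$ on $[0,1]$ by a ReLU network. Where you diverge is in how that polynomial approximation is obtained. The paper simply invokes Lemma~3.4 of \cite{telgarsky2017neural}, which directly guarantees a ReLU network of size $O(\min\{sr\ln(sr/\epsilon),\,sd\ln^2(dsr/\epsilon)\})$ approximating any degree-$r$, $s$-monomial polynomial on $[0,1]^d$; specializing to $d=1$ and constant $r$ gives the bound in one line, and composing with $\ReLU$ finishes. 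You instead reconstruct this from first principles via the Yarotsky tent-map squaring gadget, polarization-based product gates, iterative multiplication, and clamping --- essentially re-deriving the univariate case of Telgarsky's lemma. Your route is more self-contained and makes the mechanism explicit; indeed, since $r$ is fixed, your own accounting (a constant number of gates each of size $O(\ln(1/\delta))$ with $\delta=\Theta(\epsilon)$) already delivers $O(\ln(1/\epsilon))$, so the extra ``cumulative precision'' factor you invoke to reach $O(\ln^2(1/\epsilon))$ is unnecessary here, though harmless as an upper bound. The paper's route is shorter because it outsources the construction to a citation.
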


Lemma \ref{lem:approximate ployrelu} establishes an upper bound on the size of a ReLU network needed to approximate a PolyReLU activation function. This result highlights that while ReLU networks can approximate PolyReLU activations, they require a significantly larger number of parameters.

Building on Lemma \ref{lem:approximate ployrelu}, we derive the following theorem, which provides both upper and lower bounds for approximating PolyReLU networks with ReLU networks.

\begin{theorem}\label{thm:relu2ployrelu}
Let $g:[-1,1]^d \rightarrow [-1,1]$ be a PolyReLU network with depth $L$ and width $K$, and PolyReLU activation with order $r$ and Lipschitz constant $\alpha$. Suppose each neuron computes $x \mapsto \PolyReLU(a^\top x +b) $ with the pair $(a,b)$ satisfies $\|a\|_1+b\leq 1$ and $\PolyReLU:[-1,1]\rightarrow [-1,1]$ (a, b, and PolyReLU are possibly distinct across neurons).
For any given $\epsilon \in (0,1)$, there exists a ReLU network $f:[-1,1]^d \rightarrow [-1,1]$ of size 
\begin{equation}
    O\left(LK\ln^2\left(\frac{L\alpha^L}{\epsilon}\right)\right),
\end{equation}
such that 
\begin{equation}
    \max_{\vx \in [-1,1]^d} |f(\vx)-g(\vx)| < \epsilon.
\end{equation}
Conversely, there exists PolyReLU networks cannot be approximated within tolerance $\epsilon$ by any ReLU network with a size less than
\begin{equation}
    \Omega \left(KL\ln\left(\frac{1}{\epsilon}\right)\right).
\end{equation}
\end{theorem}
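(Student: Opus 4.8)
The plan is to treat the two directions separately: the upper bound reduces to Lemma~\ref{lem:approximate ployrelu} plus error propagation, while the converse requires a hardness-of-approximation construction. For the upper bound I would replace the network neuron by neuron. By Lemma~\ref{lem:approximate ployrelu}, each of the $LK$ PolyReLU activations admits a ReLU subnetwork of size $O(\ln^2(1/\delta))$ approximating it uniformly on $[-1,1]$ to a tolerance $\delta$ fixed later; the affine pre-activations $x\mapsto a^\top x+b$ are realized exactly, and the hypothesis $\|a\|_1+b\le 1$ keeps every pre-activation inside $[-1,1]$ so that the lemma applies. Splicing these subnetworks together produces a ReLU network $f$ of size $O(LK\ln^2(1/\delta))$.

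The second step is to bound how the per-neuron tolerance $\delta$ accumulates with depth. Let $e_\ell$ denote the $\normmax$ distance between the true and approximate activations at layer $\ell$. The affine map contracts error by $\|a\|_1\le 1$, while a single activation contributes its own approximation error $\delta$ plus the incoming error amplified by the Lipschitz constant $\alpha$, yielding $e_\ell\le \alpha e_{\ell-1}+\delta$ with $e_0=0$. Unrolling gives $e_L\le \delta(\alpha^L-1)/(\alpha-1)\le \delta L\alpha^L$. Choosing $\delta=\Theta(\epsilon/(L\alpha^L))$ then forces $e_L<\epsilon$, and inserting $\ln(1/\delta)=O(\ln(L\alpha^L/\epsilon))$ into the size bound yields the stated $O(LK\ln^2(L\alpha^L/\epsilon))$.

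For the converse I would exhibit one PolyReLU network of depth $L$ and width $K$ that no small ReLU network can approximate, using two standard ingredients: (i) any ReLU network of size $s$ is piecewise affine with at most $2^{O(s)}$ pieces, so $s\ge\Omega(\ln(\text{number of pieces}))$; and (ii) a Yarotsky-type lower bound by which a curved (e.g.\ quadratic) unit forces resolution $\Omega(\ln(1/\epsilon))$. The target $g$ would be built from squaring units $x\mapsto \ReLU^2(x)+\ReLU^2(-x)=x^2$, which are PolyReLU activations by Lemma~\ref{lem:degradation cases}, composed across the $L$ layers and replicated across the $K$ width so that a uniform $\epsilon$-approximation of $g$ is forced to have at least $(1/\epsilon)^{\Omega(KL)}$ affine pieces; ingredient (i) then converts this into the advertised $\Omega(KL\ln(1/\epsilon))$ lower bound on the ReLU size.

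The hard part is the lower bound, and specifically producing the \emph{product} $KL\cdot\ln(1/\epsilon)$ rather than a sum. Under the logarithm a product of piece counts becomes a sum, so piece counts that merely add --- as they do across disjoint regions of the domain --- give only $\Omega(L+\ln(1/\epsilon))$. Obtaining a true product requires the $\Omega(KL)$ curved units to be non-shareable in a multiplicative way: composition across depth must force the inner units to be approximated to ever-finer tolerance (so that a single input slice already demands $(1/\epsilon)^{\Omega(L)}$ pieces through error compounding), and a product/tensor structure across the $K$ coordinates must make the multidimensional linear regions multiply to $(1/\epsilon)^{\Omega(KL)}$. Rigorously lower-bounding the region count of the composition, while certifying that the hard function remains exactly a depth-$L$, width-$K$ PolyReLU network, is where the genuine difficulty lies; the upper bound, by comparison, is routine once Lemma~\ref{lem:approximate ployrelu} is available.
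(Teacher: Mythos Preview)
Your upper-bound argument matches the paper's almost exactly: replace each activation via Lemma~\ref{lem:approximate ployrelu}, use $\|a\|_1+|b|\le 1$ to keep all pre-activations in $[-1,1]$, and propagate error through the recursion $e_\ell\le \alpha e_{\ell-1}+\epsilon_\ell$. The only cosmetic difference is that the paper uses a \emph{graded} per-layer tolerance $\epsilon_i=\epsilon/(L\alpha^{L-i})$ and shows $e_i\le i\epsilon/(L\alpha^{L-i})$ by induction, whereas you use a uniform $\delta$ and a geometric sum; both give the same final size. (One nit: your inequality $(\alpha^L-1)/(\alpha-1)\le L\alpha^L$ fails when $\alpha<1$; the paper's graded choice avoids this edge case automatically.)

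For the converse, you are working much harder than the paper does. The paper simply invokes a known lower bound (Theorem~11 of Liang~2017, restated as Lemma~\ref{lem:lower bound}): any ReLU network approximating a differentiable strongly convex function to accuracy $\epsilon$ has size $\Omega(\ln(1/\epsilon))$. Since $x\mapsto x^2$ on $[0,1]$ is a single PolyReLU neuron, this already exhibits a PolyReLU network that is $\Omega(\ln(1/\epsilon))$-hard. That is the entire argument in the paper --- it does \emph{not} construct a depth-$L$, width-$K$ network and does not derive the $KL$ factor; the written proof establishes only $\Omega(\ln(1/\epsilon))$, with $K$ and $L$ effectively treated as constants. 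Your piece-counting program aimed at a genuine $\Omega(KL\ln(1/\epsilon))$ bound is therefore more ambitious than what the paper actually proves, and your diagnosis that forcing a multiplicative $KL$ factor is the crux is accurate --- but you should know that the paper sidesteps this difficulty rather than resolving it.
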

Theorem \ref{thm:relu2ployrelu} tells us that the total number of trainable parameters required by ReLU networks to approximate a PolyReLU neural network within a tolerance of  $\epsilon$ is $O(\ln^2(1/\epsilon))$. Conversely, there exists a PolyReLU network that can not be approximated by ReLU networks of size less than $\Omega(\ln(1/\epsilon)$. Combined with Theorem 1, we conclude that PolyReLU networks are more efficient in terms of representational capacity than ReLU networks.

\subsection{Approximation of General Smooth Function}
Similar to \cite{yarotsky2017error,boulle2020rational}, we also explore the universal approximation capabilities of PolyReLU networks in the context of Sobolev spaces \citep{adams2003sobolev}. Specifically, we show that PolyReLU networks achieve the \textbf{\textit{optimal approximation rate}} within these spaces, meaning that PolyReLU networks require minimum parameters to approximate general smooth functions in Sobolev spaces, compared with networks with the other activation.

The definition of Sobolev space $\gW^{n,\infty}\left([-1,1]^d\right)$ is stated below. The set $[-1,1]^d$ can be replaced by any compact set in $\sR^d$, we use it just for the sake of brevity.

\begin{definition}[Sobolev Spaces] For $ n,d \in \sN$, Sobolev space $\gW^{n,\infty}\left([-1,1]^d\right)$ is defined as 
\begin{equation}
    \gW^{n,\infty} \left([-1,1]^d \right) = \left\{f \in L_{\infty}\left([-1,1]^d\right)  |  \|f\|_{\gW^{n,\infty} \left([-1,1]^d \right)} < \infty \right\},
\end{equation}
with the norm which is defined as the following
\begin{equation}
    \|f\|_{\gW^{n,\infty} \left([-1,1]^d \right)} = \max_{\vn: \|\vn\|_1\leq n} \operatornamewithlimits{ess\ sup}_{\vx \in [-1,1]^d} \left\|D^{\vn}f(\vx)\right\|_{\infty},
\end{equation}
where $\vn \in \sN^d$ and $D^{\vn} f$ is the respective weak derivative of $f$, and $\operatornamewithlimits{ess\ sup}$ means the essential supremum in functional analysis.
\end{definition}
Intuitively, a Sobolev space is a space of functions endowed with a weaker notion of smoothness compared to differentiability and possessing generalized derivatives. The Sobolev space $\gW^{n,\infty}\left([-1,1]^d\right)$ contains functions from $C^{n-1}\left([-1,1]^d\right)$ which consists of functions whose derivatives of order $n-1$ are Lipschitz continous.
In the sequel, we mainly consider the unit ball within $\gW^{n,\infty}\left([-1,1]^d\right)$, which is defined as follows
$$
F_{n,d} = \{f\in \gW^{n,\infty}\left([-1,1]^d\right) | \|f\|_{\gW^{n,\infty} \left([-1,1]^d \right)} \leq 1\}.
$$
With the above definitions established, we can present the following main results. We provide an upper bound on the size of PolyReLU networks required to approximate any function in $F_{n,d}$.

\begin{theorem}\label{thm:universal approximation}
Suppose that $d, n \in \sN$ and $\epsilon \in (0,1)$. For any $f\in F_{d,n}$, there exists a PolyReLU network $g$ with size $O(\epsilon^{-d/n})$ that can approximate $f$ at a given error tolerance $\epsilon$, \ie
\begin{equation}
    \max_{\vx \in [-1,1]^d} \|f(\vx)-g(\vx)\|_{\infty} < \epsilon.
\end{equation}
\end{theorem}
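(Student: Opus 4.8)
The plan is to follow the localized Taylor-approximation strategy of \cite{yarotsky2017error}, but to exploit the fact that PolyReLU computes monomials \emph{exactly} (Lemma~\ref{lem:degradation cases}) in order to shave off the logarithmic factor that ReLU networks incur when emulating multiplication. Recall that Yarotsky's ReLU construction attains size $O(\epsilon^{-d/n}\ln(1/\epsilon))$, where the extra $\ln(1/\epsilon)$ is precisely the cost of approximating the squaring/product operation by sawtooth functions; since PolyReLU realizes $x\mapsto x^2$, and hence any product via polarization, with no error, that factor disappears and we reach the optimal rate $O(\epsilon^{-d/n})$.

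First I would set up a uniform grid on $[-1,1]^d$ with spacing $1/N$, where $N\asymp\epsilon^{-1/n}$, giving $N^d\asymp\epsilon^{-d/n}$ cells, together with a partition of unity $\{\phi_\vm\}$ subordinate to the grid. Each $\phi_\vm(\vx)=\prod_{k=1}^d\psi\!\left(N(x_k-m_k/N)\right)$ is a product of one-dimensional tent/bump functions $\psi$, and each $\psi$ is a fixed piecewise-linear map expressible through a constant number of $\ReLU$ units (a special case of $\PolyReLU$). On the cell centered at $\vx_\vm$, I would approximate $f$ by its degree-$(n-1)$ Taylor polynomial $P_\vm(\vx)=\sum_{\|\bm{\alpha}\|_1<n}c_{\vm,\bm{\alpha}}(\vx-\vx_\vm)^{\bm{\alpha}}$; the Sobolev bound $\|f\|_{\gW^{n,\infty}}\le 1$ yields, via Taylor's theorem with remainder, the local estimate $|f(\vx)-P_\vm(\vx)|\le C N^{-n}$ on the support of $\phi_\vm$.

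The candidate network is $g(\vx)=\sum_\vm\phi_\vm(\vx)\,P_\vm(\vx)$. Since $\{\phi_\vm\}$ is a partition of unity, a triangle-inequality argument gives $|f(\vx)-g(\vx)|\le\sum_\vm\phi_\vm(\vx)|f(\vx)-P_\vm(\vx)|\le C N^{-n}\le\epsilon$ after fixing the constant in $N\asymp\epsilon^{-1/n}$. The crucial implementation step is that every building block of $g$ is polynomial: the monomials $(\vx-\vx_\vm)^{\bm{\alpha}}$ are products of powers $x_k^j$, each representable exactly by $\PolyReLU$ via $x^j=\ReLU^j(x)+(-1)^j\ReLU^j(-x)$, and the products defining $\phi_\vm$ as well as the product $\phi_\vm\cdot P_\vm$ are realized exactly through the polarization identity $uv=\tfrac14\big((u+v)^2-(u-v)^2\big)$, where squaring is a single $\PolyReLU$ operation. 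Appealing to Lemma~\ref{lem:degradation cases} (and to Theorem~\ref{thm:ployrelu2relu} for the piecewise-linear pieces) assembles these into a single $\PolyReLU$ network. Finally, I would count parameters: each of the $N^d$ cells contributes only $O(1)$ units (a fixed number of coefficients and product gadgets, with $d$ and $n$ treated as constants), so the total size is $O(N^d)=O(\epsilon^{-d/n})$.

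The main obstacle I anticipate is the bookkeeping needed to guarantee that all intermediate quantities stay in $[-1,1]$, so that the network maps into $[-1,1]$ as required and so that each neuron respects the normalization of the ambient PolyReLU class; this forces careful rescaling of the tent functions, the Taylor coefficients, and the outputs of the squaring gadgets. A secondary subtlety is controlling the number of cells whose supports overlap a given point (bounded by $2^d$), so that the summation inflates neither the error constant nor the parameter count. Once these normalization and overlap constants are pinned down, the exactness of polynomial arithmetic in PolyReLU is exactly what delivers the log-free, and hence optimal, approximation rate.
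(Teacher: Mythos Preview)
Your proposal is correct and follows essentially the same route as the paper: both build the Yarotsky partition-of-unity plus local Taylor approximant $\sum_{\vm}\phi_{\vm}P_{\vm}$, bound its error by $O(N^{-n})$, and then exploit that PolyReLU realizes the required piecewise-linear bumps and polynomial products \emph{exactly} to avoid the $\ln(1/\epsilon)$ overhead of ReLU. The only cosmetic difference is that you build monomials and products directly from the identity $x^j=\ReLU^j(x)+(-1)^j\ReLU^j(-x)$ together with polarization, whereas the paper packages the same exactness via a separate lemma expressing $x^n$ through a base-$r$ expansion of the exponent; both yield $O(1)$ cost per cell once $d,n$ are fixed, and your explicit polarization step in fact fills in a detail (how $\hat\phi_{\vm}\cdot g_{\vm}$ is formed) that the paper leaves implicit.
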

Theorem \ref{thm:universal approximation} indicates that PolyReLU networks can achieve an \emph{optimal approximation rate} of $O(\epsilon^{-d/n})$. In contrast, previous works by \cite{yarotsky2017error} demonstrated that ReLU networks require $O(\epsilon^{-d/n}\ln(1/\epsilon))$ parameters to achieve a similar approximation error. Similarly \cite{boulle2020rational} showed that rational neural networks need $O(\epsilon^{-d/n}\ln(\ln(1/\epsilon)))$ parameters for the same task. Therefore, the approximation ability of PolyReLU networks is superior to that of both ReLU networks and rational networks. Furthermore,
Theorem 4.2 in \cite{devore1989optimal} shows that the total number of parameters required by neural networks to approximate functions in $F_{n,d}$ is $\Omega(\epsilon^{-d/n})$. 
Therefore, our PolyReLU networks achieve the \emph{optimal approximation rate} in the context of Sobolev spaces. Additional disscution is included in Appendix \ref{sec: optimal approximation rate}.

\section{Experiments} \label{sec:experiments}
In this section, we demonstrate the expressivity and effectiveness of PolyCom within the transformer through experiments on LLMs.
\subsection{Setup}

\textbf{Baseline.} We evaluate PolyCom across two series of models: a 1B dense model and a Mixture of Experts (MoE) model with 1B active and 7B total parameters. The 1B dense model contains approximately 1.3 billion parameters with an architecture similar to Llama 2 \citep{touvron2023llama}. For the MoE model, we use the OLMoE framework \citep{muennighoff2024olmoeopenmixtureofexpertslanguage}, which activates 1.3B parameters out of a total of 6.9B parameters. Both models are trained from scratch.
We compare the performance of PolyCom with several activation functions, including ReLU, square ReLU, GELU, and SwiGLU. All experiments are conducted on NVIDIA A100-80G GPUs, 32 GPUs for the dense model, and 64 GPUs for the MoE model.

\textbf{Model Configuration.} 
For the dense model, the transformer consists of 24 layers with hidden size $d_{model}=2048$ and 16 attention heads. In the MoE model, the transformer is composed of 16 layers, with a hidden size of $d_{model}=2048$, 16 attention heads, and 64 experts. To maintain a consistent number of trainable parameters across all activation functions, we adjust the intermediate size accordingly. Specifically, for SwiGLU, the intermediate size is set to two-thirds that of the other activations in all experiments. More details can be found in Appendix \ref{sec: experimental details}.

\textbf{Datasets.} The dense model is trained on the RedPajama-1T dataset \footnote{RedPajama-1T 
 is available at \url{https://github.com/togethercomputer/RedPajama-Data}.} \citep{together2023redpajama}, which was developed by the open-source AI community to enable competitive performance against proprietary models.
The MoE model is trained on the OLMoE Mix dataset \footnote{OLMoE Mix dataset is available at \url{https://huggingface.co/datasets/allenai/OLMoE-mix-0924}.} \citep{muennighoff2024olmoeopenmixtureofexpertslanguage}. 

\textbf{Hyperparameters.}
Unless otherwise specified, we use a 3-order PolyCom by default and initialize the coefficients as $a_i=1/3$ for $i=1,2,3$ and set $a_0=0$. Model weights are randomly initialized. For optimization, we apply the AdamW optimizer with $\beta_1=0.9$ and $\beta_2=0.95$. All models are trained on sequences of 4096 tokens. For the dense model, we set the initial learning rate to 3e-4, decaying to 1.5e-5 using a cosine scheduler. The MoE model starts with a learning rate of 4e-4, also decaying according to a cosine schedule.
We summarize the hyperparameters in Table \ref{table: hyperparameters}.

\textbf{Evaluation.} To evaluate the performance of LLMs with PolyCom, we use a wide range of open benchmarks, including ARC-Easy \citep{clark2018think}, ARC-Challenge (ARC-C) \citep{clark2018think}, HellaSwag \citep{zellers2019hellaswag}, PIQA \citep{bisk2020piqa}, SciQ \citep{welbl2017crowdsourcing}, CoQA \citep{reddy2019coqa}, Winogrande \citep{sakaguchi2021winogrande}, MMLU \citep{hendrycksmeasuring}, BoolQ \citep{clark2019boolq}, COPA \citep{gordon2012semeval}, CSQA \citep{talmor2019commonsenseqa}, OBQA \citep{mihaylov2018can}, and SocialIQA \citep{sap2019socialiqa}. 
We utilize the LM Eval Harness \citep{eval-harness} for standardized performance evaluation.

\subsection{Results on Dense Model}

\begin{table}[t]
\caption{Overall results of the 1B dense model with different activation functions, reported in terms of training loss, validation perplexity, and downstream accuracy (\%). ARC-E and ARC-C refer to ARC-Easy and ARC-Challenge, respectively. The best results in each column are highlighted in \textbf{bold}. ``Avg.'' denotes the average accuracy of all downstream tasks.}
\label{table: evaluation results on dense model}
\begin{center}
\setlength{\tabcolsep}{4pt}

\begin{tabular}{l|cc|ccccccc}
\toprule
         & \textbf{Loss}$\downarrow$ & \textbf{PPL}$\downarrow$ & \textbf{ARC-E} & \textbf{ARC-C} & \textbf{HellaSwag} & \textbf{PIQA}  & \textbf{SciQ}  & \textbf{Winograde} & \textbf{Avg.}$\uparrow$\\
\midrule
SwiGLU    & 2.19          & 3.22           & 56.61    & 27.47         & 49.23     & 68.61 & 86.10 &\textbf{56.83}     & 57.47   \\
GELU     & 2.20          & 3.24           & 55.43    & 27.73         & 48.42     & 68.12 & 87.40 & 54.78     & 56.98   \\
ReLU     & 2.21          & 3.26           & 55.68    & 28.50         & 48.59     & 68.39 & 87.10 & 54.85     & 57.18   \\
PolyReLU & \textbf{2.17}         & 3.18           & 57.53    & 27.99         & 50.19     & \textbf{70.29} & \textbf{87.60} & 55.72     & 58.22   \\
PolyNorm & \textbf{2.17}          & \textbf{3.17}           & \textbf{59.68}    & \textbf{29.01}         & \textbf{50.86}     & 69.15 & 87.20 & 56.20    & \textbf{58.68} \\
\bottomrule
\end{tabular}
\end{center}
\end{table}

\textbf{Training Dynamics of 1B Dense Model.} Figure \ref{fig:overall results for dense model} compares the training dynamics of the 1B dense model across different activation functions. As shown in the figure, models using PolyReLU and PolyNorm exhibit lower training loss and validation perplexity throughout the training process compared to models utilizing other activation functions. This indicates that PolyCom accelerates the convergence of LLMs. The models with PolyReLU and PolyNorm also consistently outperform others in downstream tasks by large margins, highlighting the advantage of PolyCom in improving the overall expressivity and effectiveness of LLMs.

\textbf{Downstream Evaluation.} Table \ref{table: evaluation results on dense model} presents the training loss, validation perplexity, and downstream task accuracy (\%) after processing 250 billion training tokens. The downstream tasks include ARC-Easy, ARC-Challenge, HellaSwag, PIQA, SciQ, and Winograde. More detailed results are provided in Appendix \ref{sec:additional results on dense model}. The results clearly demonstrate that the PolyCom family (PolyReLU and PolyNorm) outperforms the other activation functions. For instance, PolyNorm outperforms SwiGLU by an average margin of 1.21\% across six downstream tasks. This underscores the expressivity and efficiency of PolyCom as an activation function in transformer models.

\subsection{Results on MoE Model}
Our experiments with MoE modes are based on OLMOE-1B-7B, which has 1 billion activate parameters and 7 billion total parameters \citep{muennighoff2024olmoeopenmixtureofexpertslanguage}. Due to computational constraints, we compare only the PolyNorm activation function, shown to perform best in dense models, with the widely used SwiGLU activation function, which is commonly employed in current LLM architectures.

\begin{figure}[t]
\centering
\includegraphics[width=\linewidth]{./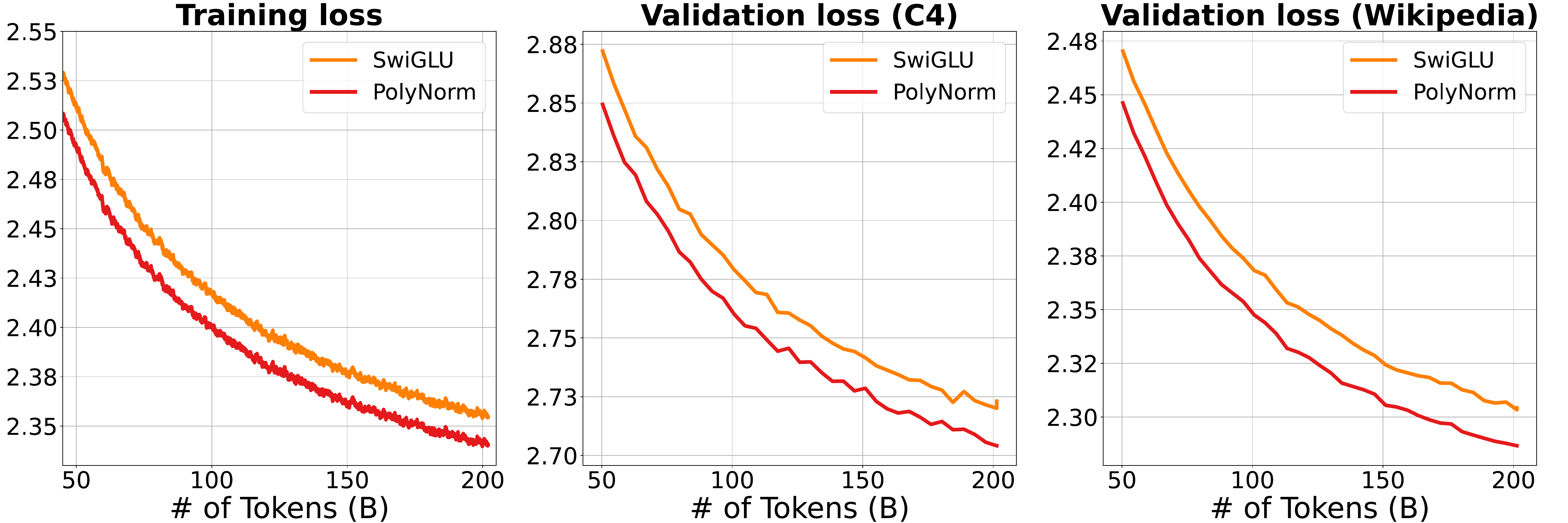}
\caption{Training and validation loss on C4 and Wikipedia for MoE models with 200 billion training tokens. We compare models using SwiGLU and PolyNorm activation functions. PolyNorm demonstrates lower training and validation losses, indicating faster convergence.
}
\label{fig:overall results for moe model}
\end{figure}

\begin{figure}[t]
\centering
\includegraphics[width=\linewidth]{./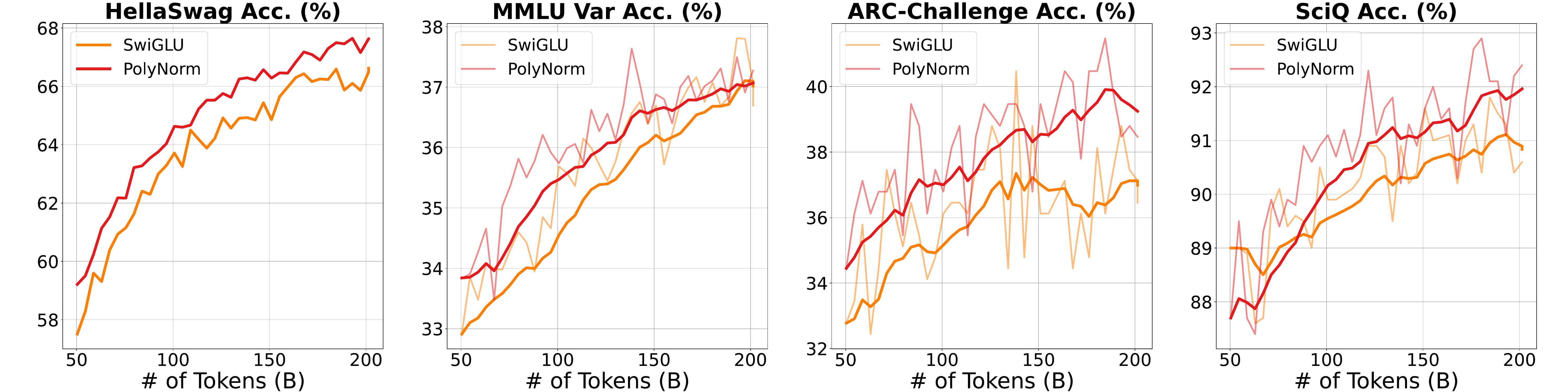}
\caption{Dynamics of downstream performance on HellaSwag, MMLU Var, ARC-Challenge, and SciQ for MoE models with 200 billion training tokens. Models with PolyNorm significantly outperform those with SwiGLU on downstream tasks.
}
\label{fig:overall evaluation results for moe model}
\end{figure}

\textbf{Training dynamics of MoE model.} In Figure \ref{fig:overall results for moe model}, we report the training and validation loss of MoE models trained on 200 billion tokens. Models using PolyNorm consistently show lower losses compared to those using SwiGLU, indicating that PolyNorm enables faster learning. Figure \ref{fig:overall evaluation results for moe model} shows the downstream performance on HellaSwag, MMLU Var\footnote{MMLU Var is a variant of MMLU \citep{hendrycksmeasuring} using varied few-shots \citep{muennighoff2024olmoeopenmixtureofexpertslanguage}.}, ARC-Challenge, and SciQ. PolyNorm outperforms SwiGLU on all tasks, with notable improvements, demonstrating superior generalization capabilities.

\begin{table}[t]
\caption{Validation losses of MoE models with different activation functions. CC denotes Common Crawl. Best results per column are \textbf{bold}.}
\label{table: validation loss on moe model}
\begin{center}
\setlength{\tabcolsep}{3pt}

\begin{tabular}{lccccccccccccc}
\toprule
 \textbf{Methods} & \textbf{C4}   &\textbf{ Books} & \textbf{CC }  & \textbf{peS2o} & \textbf{Reddit} &\textbf{ Stack} & \textbf{\makecell{Wiki-\\pedia}} & \textbf{ICE}  & \textbf{M2D2} & \textbf{Pile} & \textbf{\makecell{Wiki-\\
text}} & \textbf{Avg.}$\downarrow$ \\
\midrule
SwiGLU   & 2.72 & 2.59  & 2.79 & 2.16  & 2.93   & 1.01  & 2.30      & 2.50 & 3.07 & 2.07 & 2.37     & 2.41 \\
PolyNorm &     \textbf{2.71} & \textbf{2.57} & \textbf{2.78} & \textbf{2.15} & \textbf{2.92} & \textbf{1.00} & \textbf{2.29} & \textbf{2.49} & \textbf{3.06} & \textbf{2.03} & \textbf{2.34} & \textbf{2.39}\\
\bottomrule
\end{tabular}
\end{center}
\end{table}

\textbf{Dowmstream Evaluation.} Table \ref{table: validation loss on moe model} presents the validation losses on 11 datasets. PolyNorm consistently achieves lower validation losses than SwiGLU across all datasets, with an average improvement of 0.02. In Table \ref{table: evaluation results on moe model}, we also observe that PolyNorm outperforms SwiGLU on 8 downstream tasks. These results highlight the superior performance of models using the PolyNorm activation function. Additional results can be found in Appendix \ref{sec:additional resutls on moe model}.

\begin{table}[t]
\caption{Downstream evaluation results of MoE models with different activation functions. ARC-C, ARC-E, OQA denote ARC-Challenge, ARC-Easy, and OpenbookQA, respectively. Best results per column are in \textbf{ bold}.}
\label{table: evaluation results on moe model}
\begin{center}

\setlength{\tabcolsep}{2pt}
\begin{tabular}{lcccccccccc}
\toprule
\textbf{Tasks}    & \textbf{\makecell{MMLU\\Var} }      & \textbf{\makecell{Hella-\\Swag}}      & \textbf{SciQ}           & \textbf{\makecell{ARC-C}}  & \textbf{\makecell{ARC-E}}       & \textbf{PIQA }          & \textbf{\makecell{Wino-\\Grande}}     & \textbf{OQA  }  & \textbf{COPA  }         & \textbf{Avg.}$\uparrow$          \\
\midrule
SwiGLU   & 37.07          & 66.49          & 90.60          & 37.12          & \textbf{71.58} & 76.61          & \textbf{62.75} & 39.80          & 83.00        & 62.78          \\
PolyNorm & \textbf{37.27} & \textbf{67.63} & \textbf{92.40} & \textbf{38.46} & 70.70          & \textbf{77.04} & 62.19          & \textbf{40.60} & \textbf{84.00} & \textbf{63.37}\\
\bottomrule
\end{tabular}
\end{center}
\end{table}

\subsection{Ablations and Analysis.}

\textbf{Order of PolyCom.} We first investigate the effect of different orders of PolyCom. We vary the order $r$ of PolyReLU in the range $\{2,3,4\}$ and plot the results in Figure \ref{fig:ablation-different orders}. As seen, the convergence speed improves as the order increases. However, there is no noticeable difference between orders 3 and 4 in terms of convergence speed. Additionally, increasing the order can lead to computational overhead and overflow issues, particularly when using low-precision arithmetic. Based on these observations, we select $r=3$ as the default order for PolyCom in our experiments, balancing both performance and computational efficiency.

\textbf{Different Polynomial Composition Functions.} We evaluate the impact of different polynomial composition functions by comparing PolyReLU, PolyPReLU, PolyNorm, and PolyReLUNorm in Figure \ref{fig:ablation-different compositions}. Our results indicate that PolyNorm, which uses normalization as the composition function, achieves the lowest training loss and best overall performance. This suggests that normalization plays a key role in stabilizing training and enhancing the model's ability to generalize. In contrast, combining ReLU with normalization (PolyReLUNorm) provides intermediate results, suggesting that more complex compositions do not always lead to better outcomes.

\textbf{Variants of ReLU.} In Figure \ref{fig:ablation-ReLU variants}, we compare different variants of the ReLU activation function, including ReLU and ReLU$^2$. PolyReLU consistently outperforms both ReLU and ReLU$^2$ across all tasks, highlighting the benefits of using polynomial composition. This result reinforces the hypothesis that introducing higher-order terms through PolyCom enables the model to capture more complex data interactions, thus improving the expressivity of the activation function without significantly increasing model size or complexity.

\textbf{Rank of Weights.} To understand how PolyCom enhances model performance, we analyze the rank of the weights in each FFN layer of the transformer. We use the effective rank \citep{roy2007effective} to measure the effective dimensionality of weights and its definition is in Appendix \ref{sec:definition of effective rank}. Figure \ref{fig:weight rank} shows that PolyReLU and PolyNorm result in higher weight ranks compared to other activation functions such as SwiGLU, GELU, and ReLU. A higher rank in the weight matrices usually indicates a greater capacity for representing complex patterns in the data. These findings suggest that PolyCom improves the expressibility of transformers by allowing the FFN layers to better utilize their parameters, ultimately leading to better generalization on downstream tasks.

\textbf{Layer-wise Similarity.} We further analyze the layer-wise similarity of hidden states using cosine similarity, as illustrated in Figure \ref{fig:layer-wise cosine similarity}. For both dense and MoE models, we compare SwiGLU with PolyNorm. The results reveal that PolyNorm consistently maintains lower layer-wise similarity compared to SwiGLU, indicating that PolyNorm promotes greater diversity between layers. This diversity likely enables the model to learn more complex representations, as deeper layers are not merely replicating the functionality of earlier ones. Notably, the gap in cosine similarity between PolyNorm and SwiGLU widens in the deeper layers, which are generally more crucial for downstream task performance. This increased diversity across layers enhances the model's ability to capture complex relationships, thereby improving the overall effectiveness of LLMs.

\begin{figure}[t]
\centering  %
\subfigure[Different orders of PolyReLU]{   %
\begin{minipage}{0.31\textwidth}
\centering    %
\includegraphics[width=\linewidth]{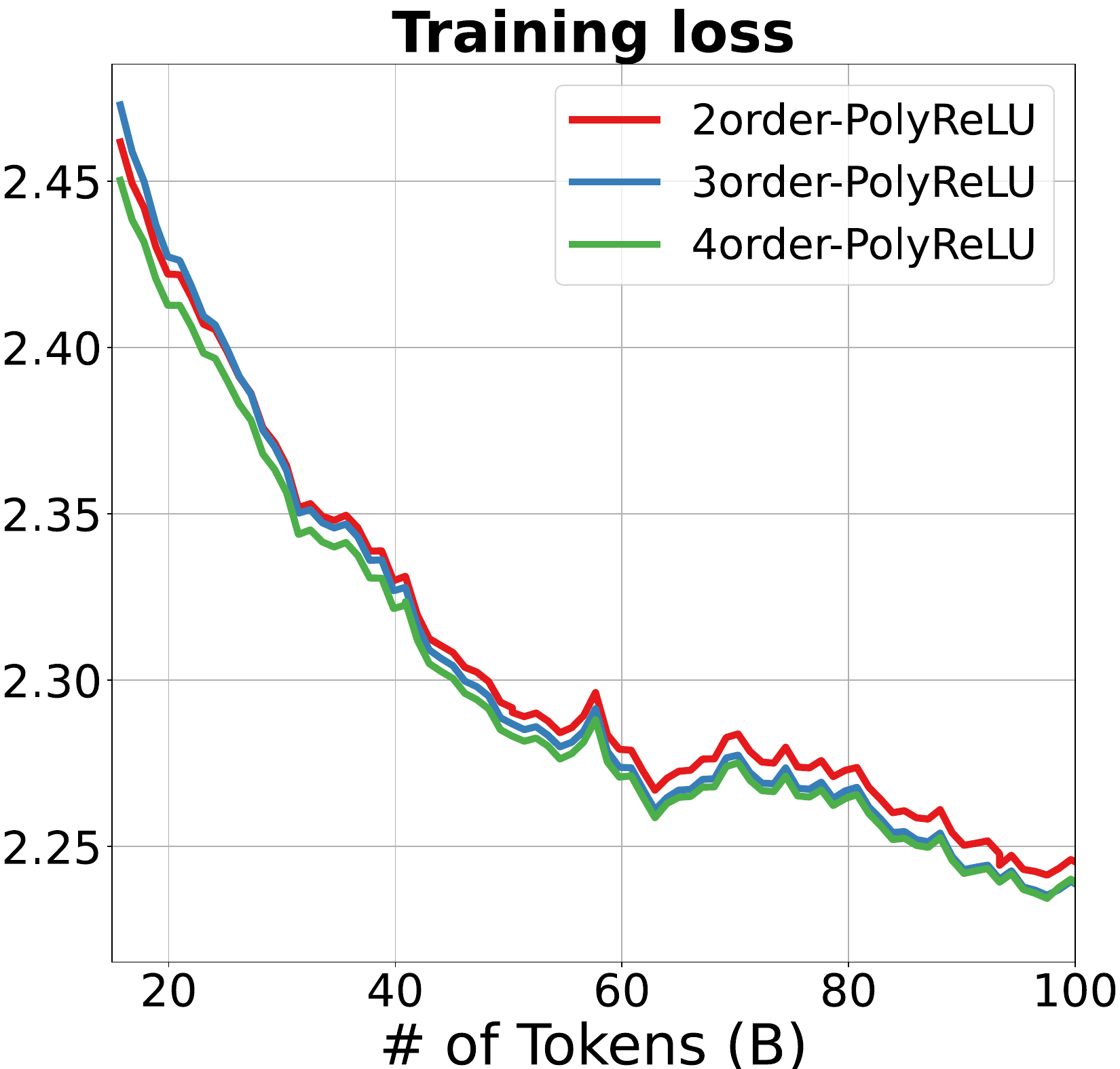} 
\label{fig:ablation-different orders}
\end{minipage}
}
\hfill
\subfigure[Different compositions]{   %
\begin{minipage}{0.31\textwidth}
\centering    %
\includegraphics[width=\linewidth]{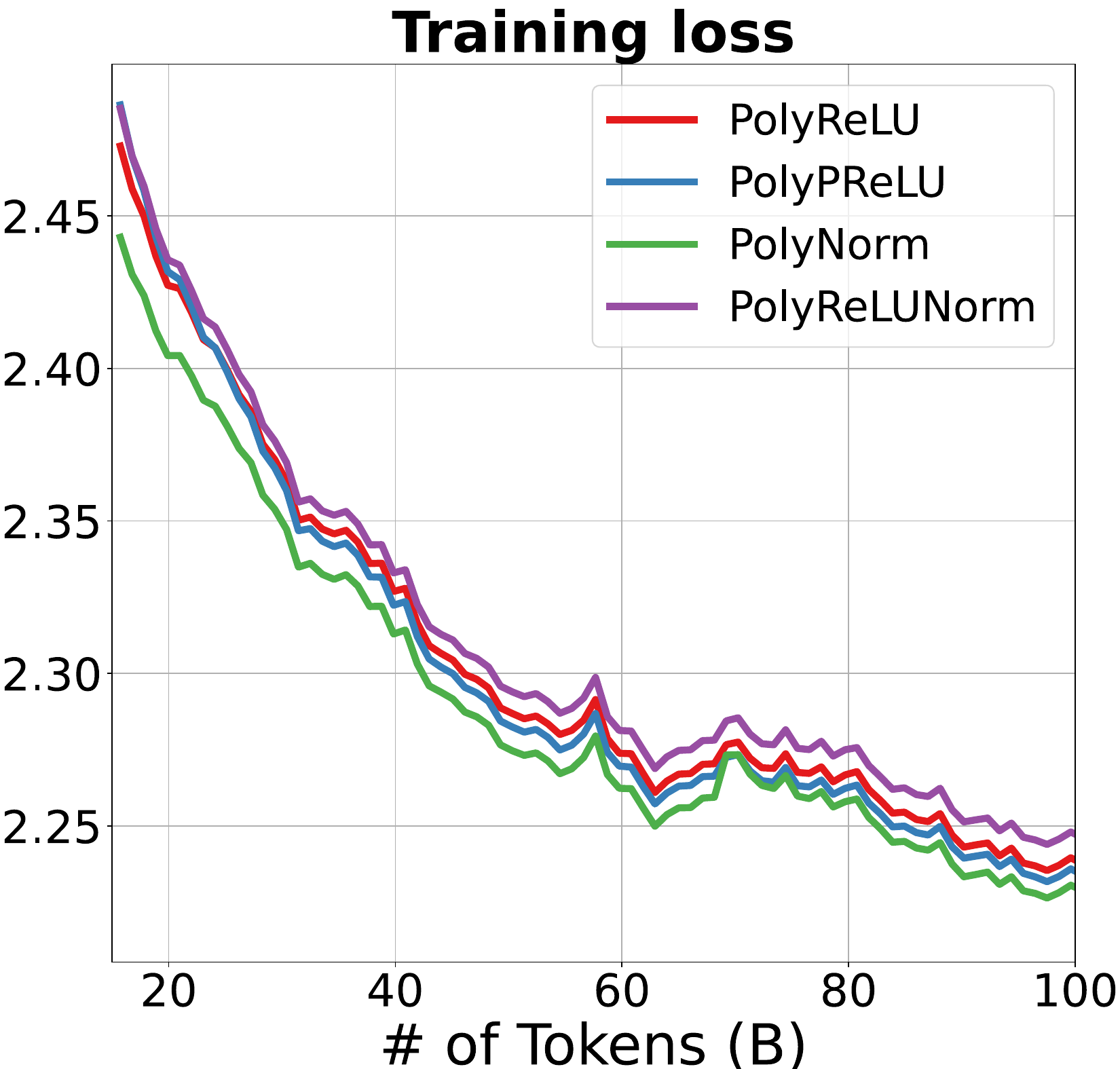}  
\label{fig:ablation-different compositions}
\end{minipage}
}
\hfill
\subfigure[ReLU variants]{ %
\begin{minipage}{0.31\textwidth}
\centering    %
\includegraphics[width=\linewidth]{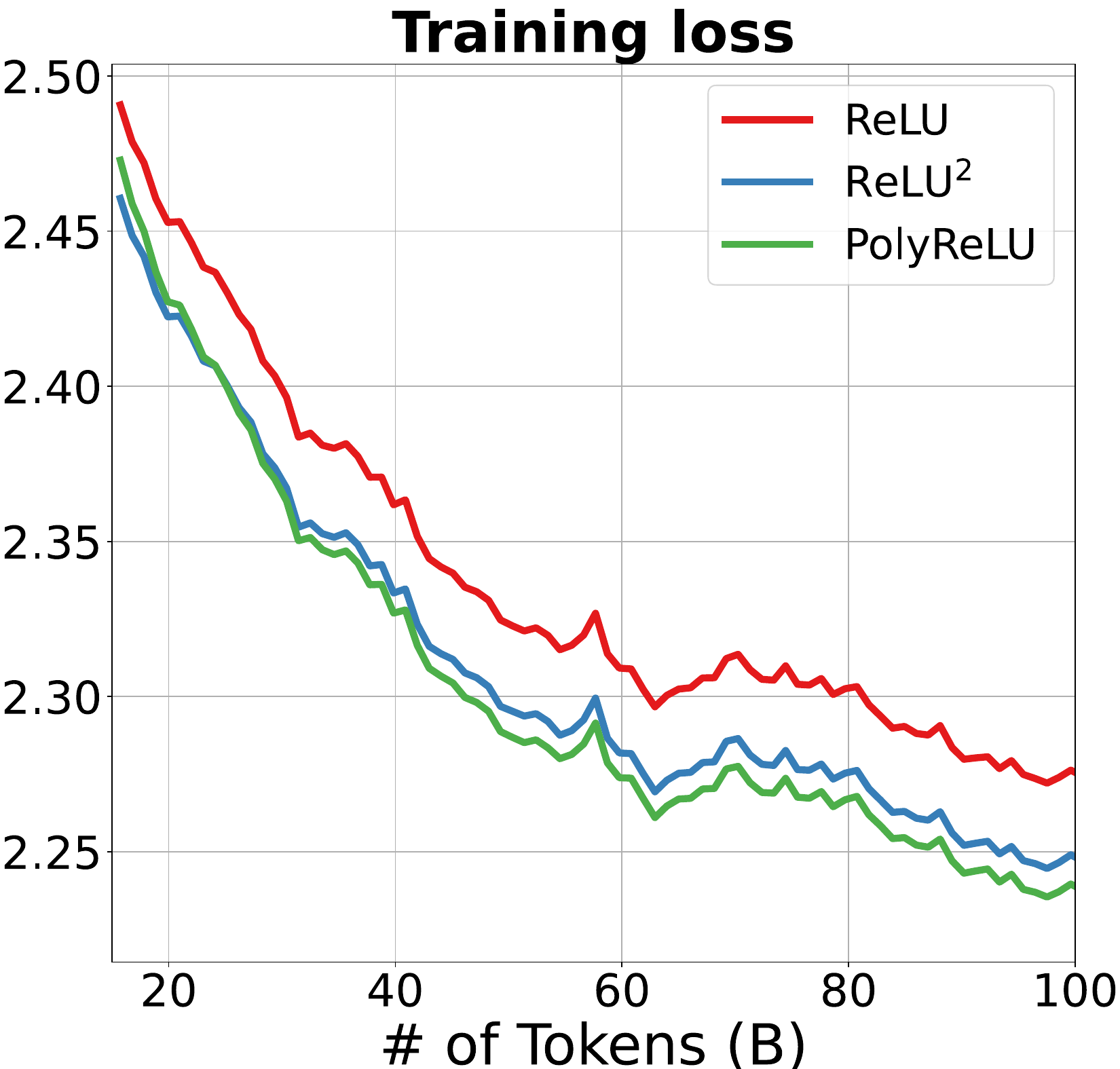}
\label{fig:ablation-ReLU variants}
\end{minipage}
}
\caption{Training loss for 1B dense models with different activation functions.
\ref{fig:ablation-different orders}: We compare different orders of PolyReLU.
\ref{fig:ablation-different compositions}: Comparison of PolyCom with different composition functions.
\ref{fig:ablation-ReLU variants}: Comparison of different variants of ReLU activation function.
}
\label{fig:ablation}
\end{figure}

\begin{figure}[t]
\centering  %
\subfigure[Rank of $W_{\text{up}}$, dense]{   %
\begin{minipage}{0.23\textwidth}
\centering    %
\includegraphics[width=\linewidth]{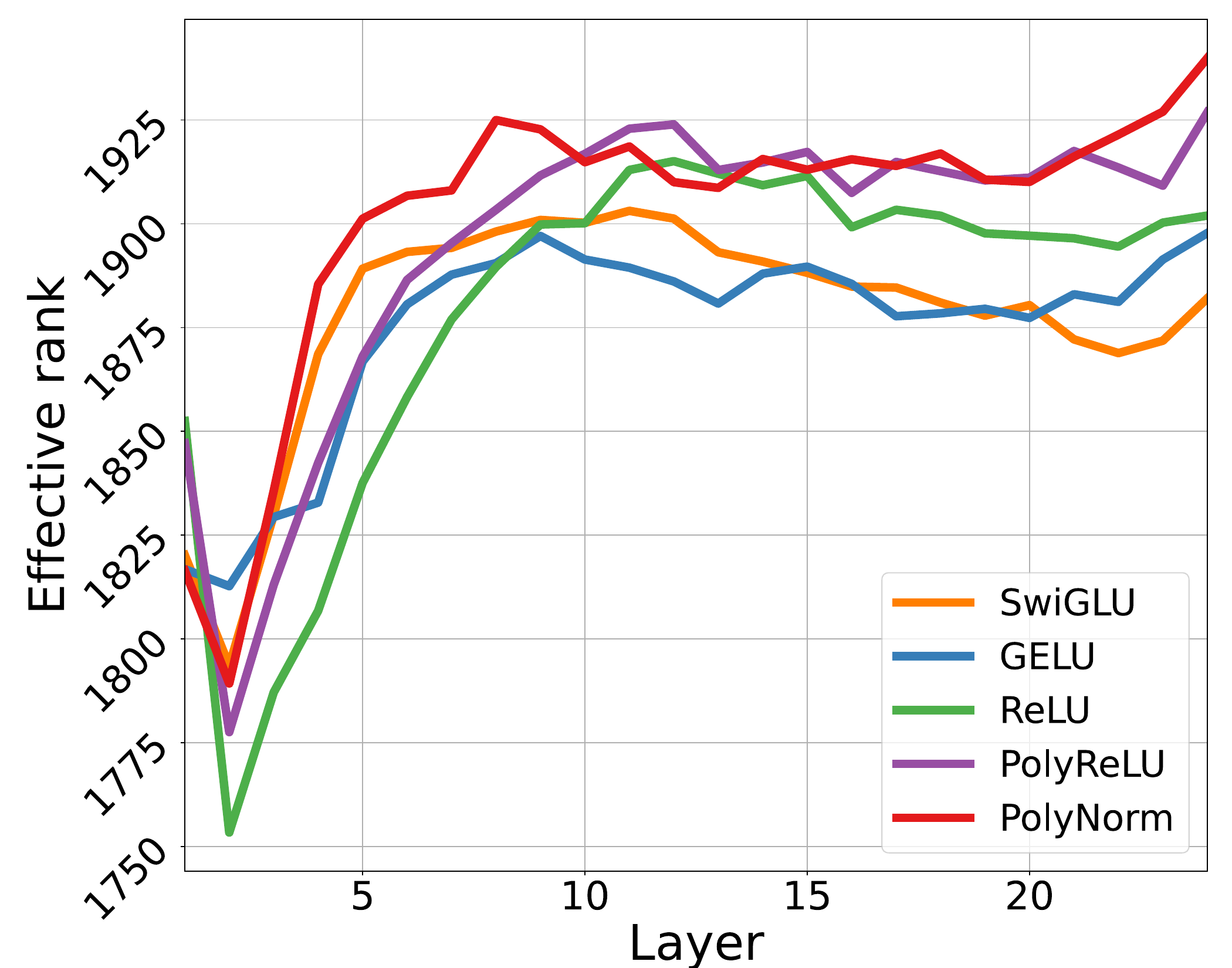} 
\label{fig:weight rank of fc1}
\end{minipage}
}
\hfill
\subfigure[Rank of $W_{\text{down}}$, dense]{   %
\begin{minipage}{0.23\textwidth}
\centering    %
\includegraphics[width=\linewidth]{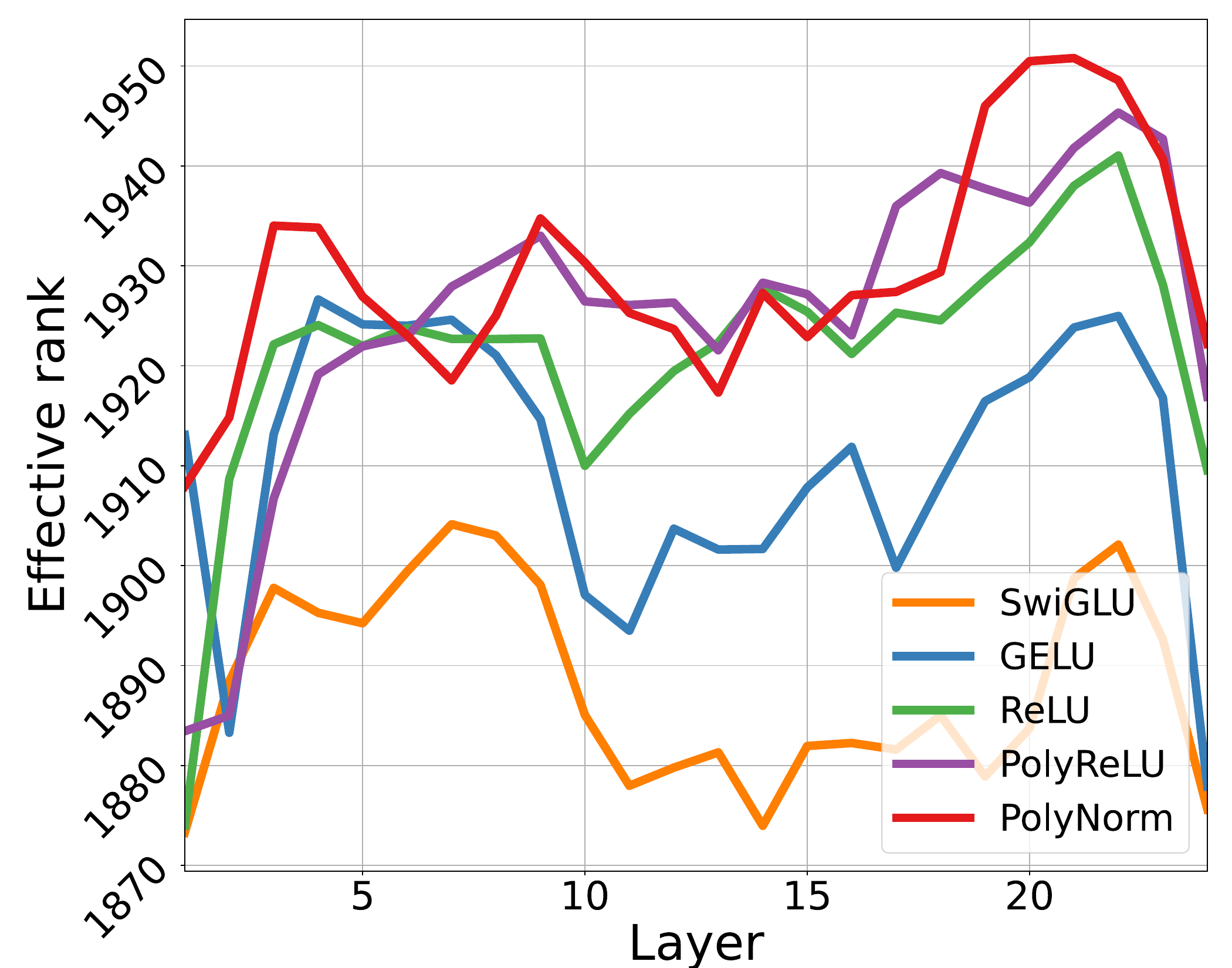}  
\label{fig:weight rank of fc2}
\end{minipage}
}
\hfill
\subfigure[Rank of $W_{\text{up}}$, MoE]{ %
\begin{minipage}{0.23\textwidth}
\centering    %
\includegraphics[width=\linewidth]{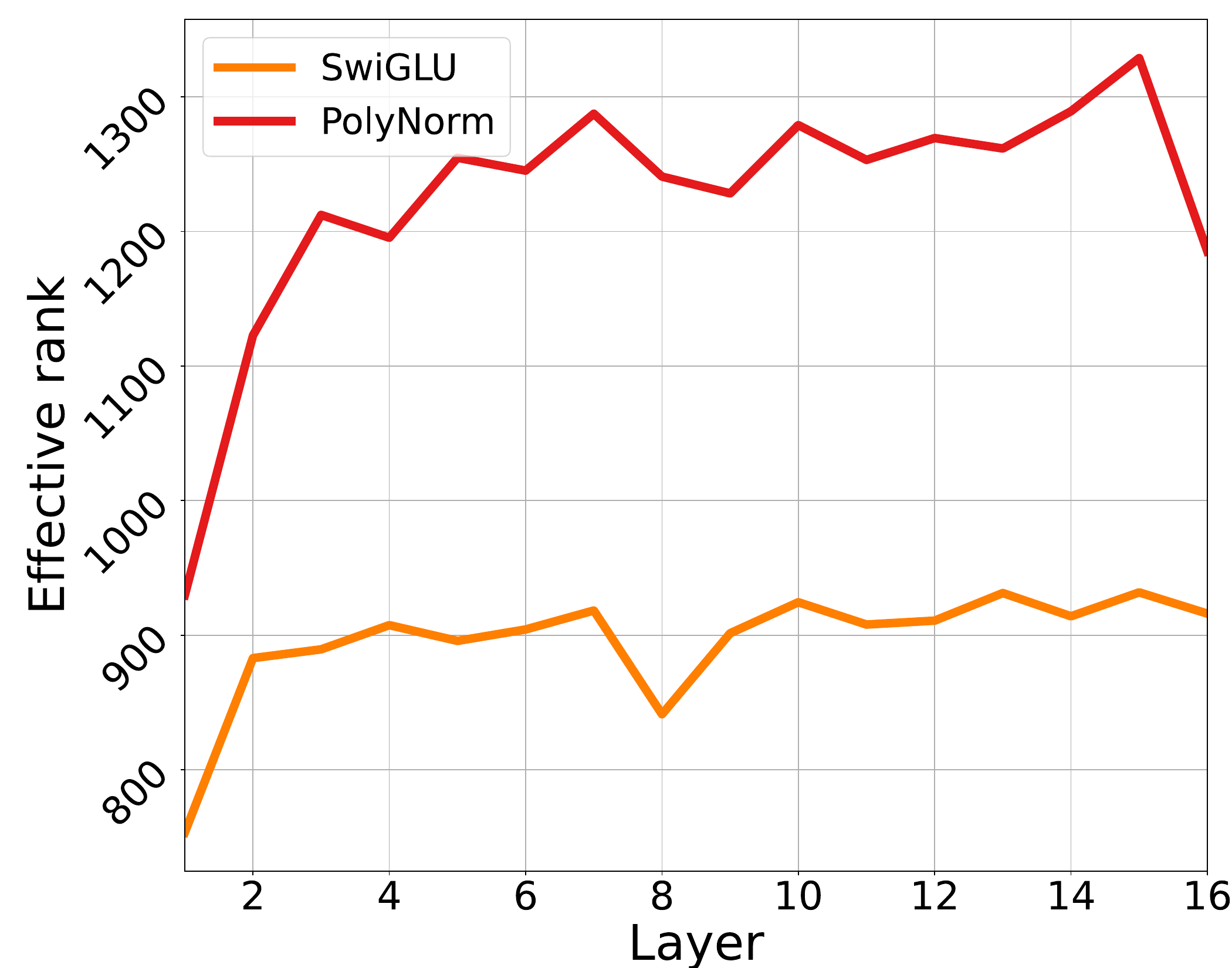}
\label{fig:weight rank of fc1 for moe}
\end{minipage}
}
\hfill
\subfigure[Rank of $W_{\text{down}}$, MoE]{ %
\begin{minipage}{0.23\textwidth}
\centering    %
\includegraphics[width=\linewidth]{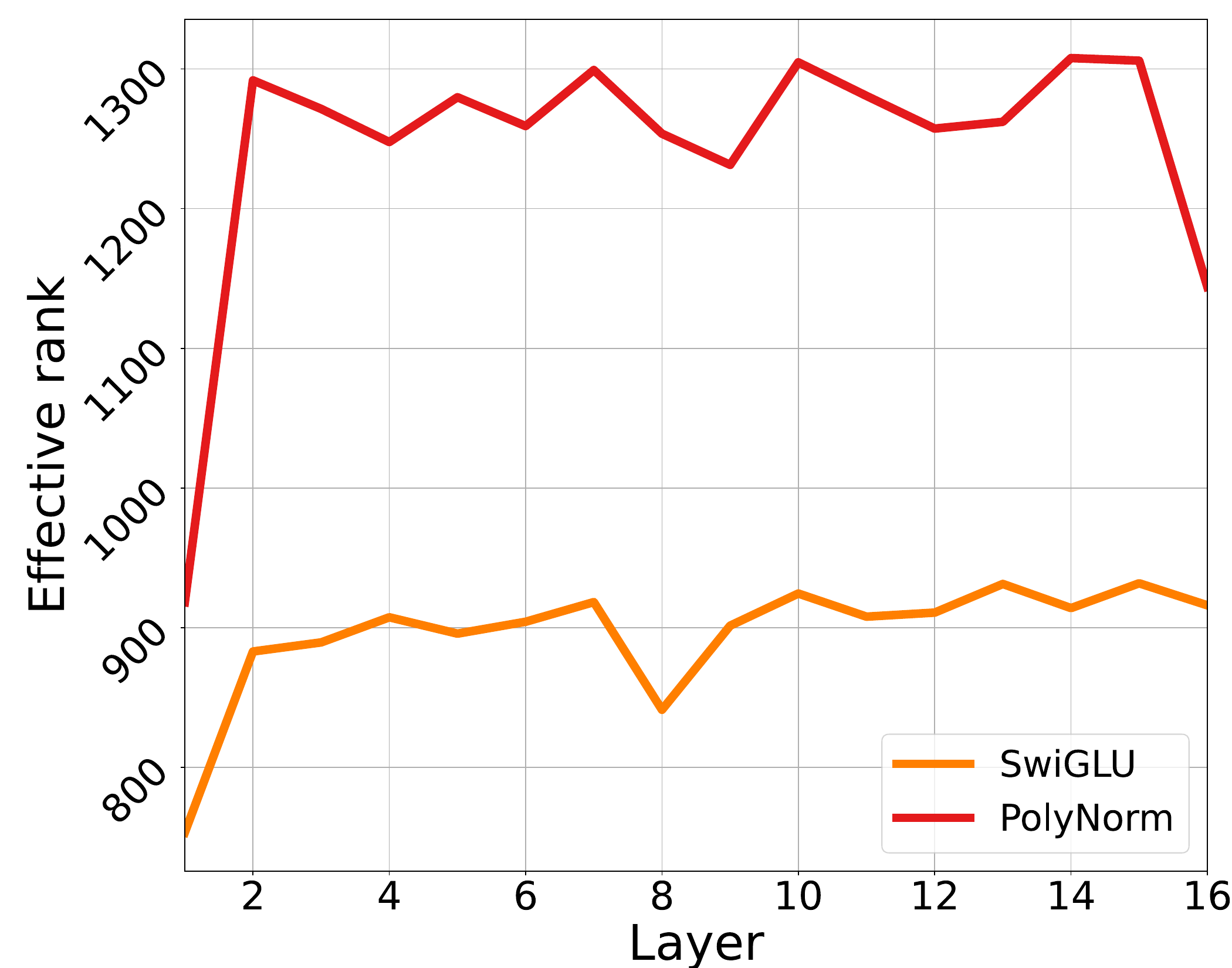}
\label{fig:weight rank of fc2 for moe}
\end{minipage}
}
\caption{Rank of weights in each FFN. \ref{fig:weight rank of fc1} $\&$ \ref{fig:weight rank of fc2} for the dense model,
\ref{fig:weight rank of fc1 for moe} $\&$ \ref{fig:weight rank of fc2 for moe} for the MoE model.
}
\label{fig:weight rank}
\end{figure}

\begin{figure}[t]
\centering  %
\subfigure[SwiGLU, dense]{   %
\begin{minipage}{0.23\textwidth}
\centering    %
\includegraphics[width=\linewidth]{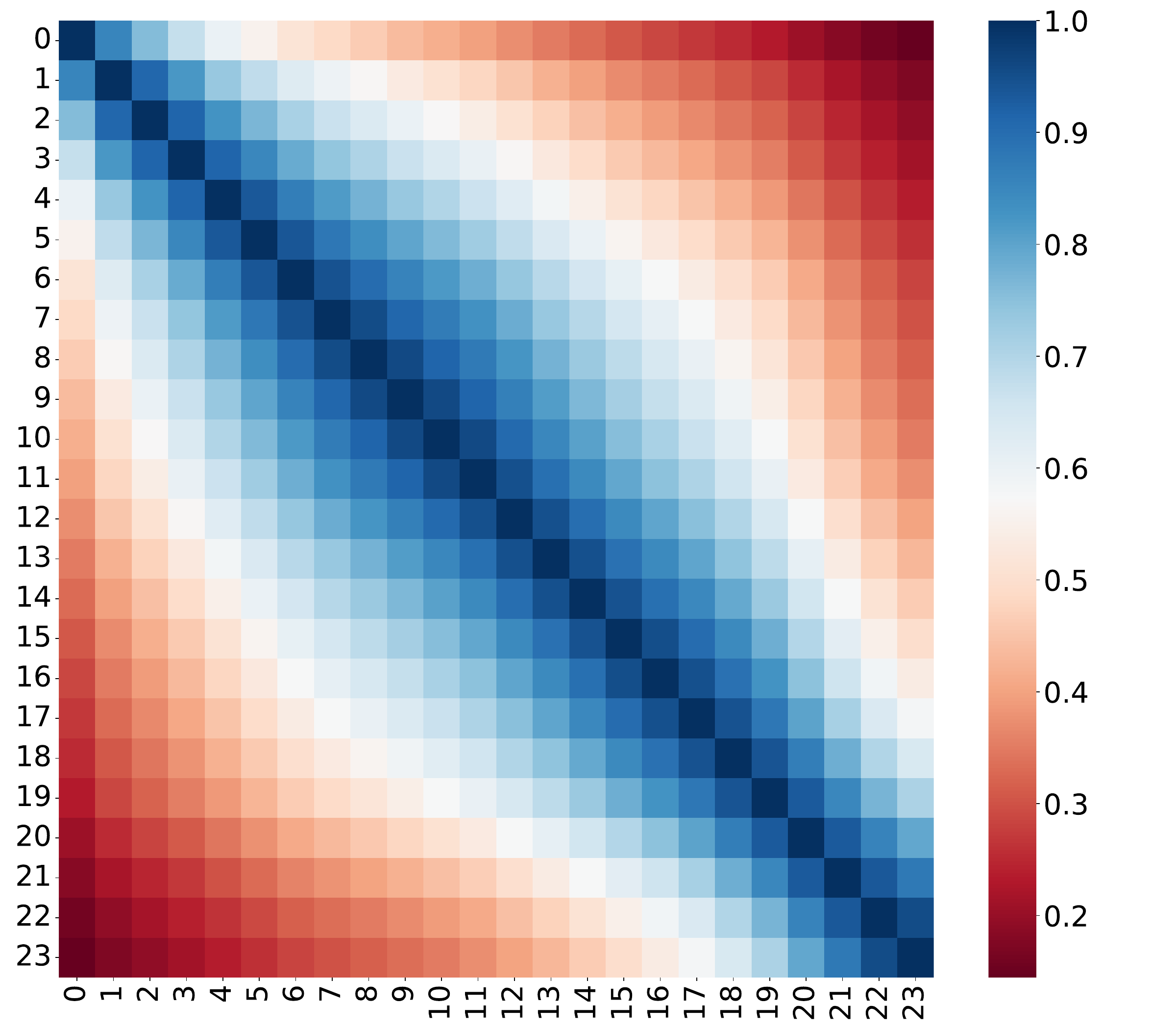} 
\label{fig:layer-wise similarity of swiglu}
\end{minipage}
}
\hfill
\subfigure[PolyNorm, dense]{   %
\begin{minipage}{0.23\textwidth}
\centering    %
\includegraphics[width=\linewidth]{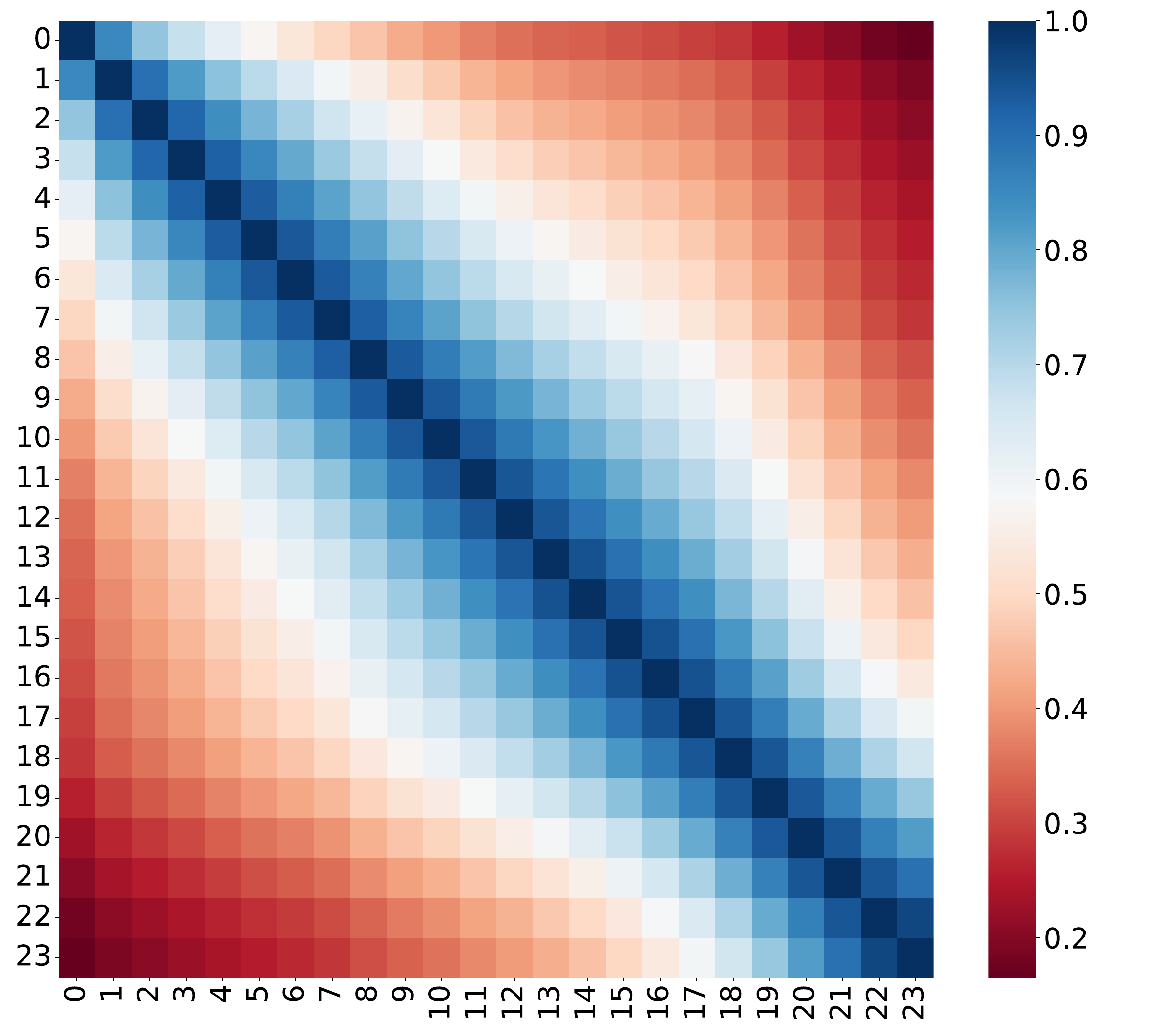}  
\label{fig:layer-wise similarity of polynorm}
\end{minipage}
}
\hfill
\subfigure[SwiGLU, MoE]{ %
\begin{minipage}{0.23\textwidth}
\centering    %
\includegraphics[width=\linewidth]{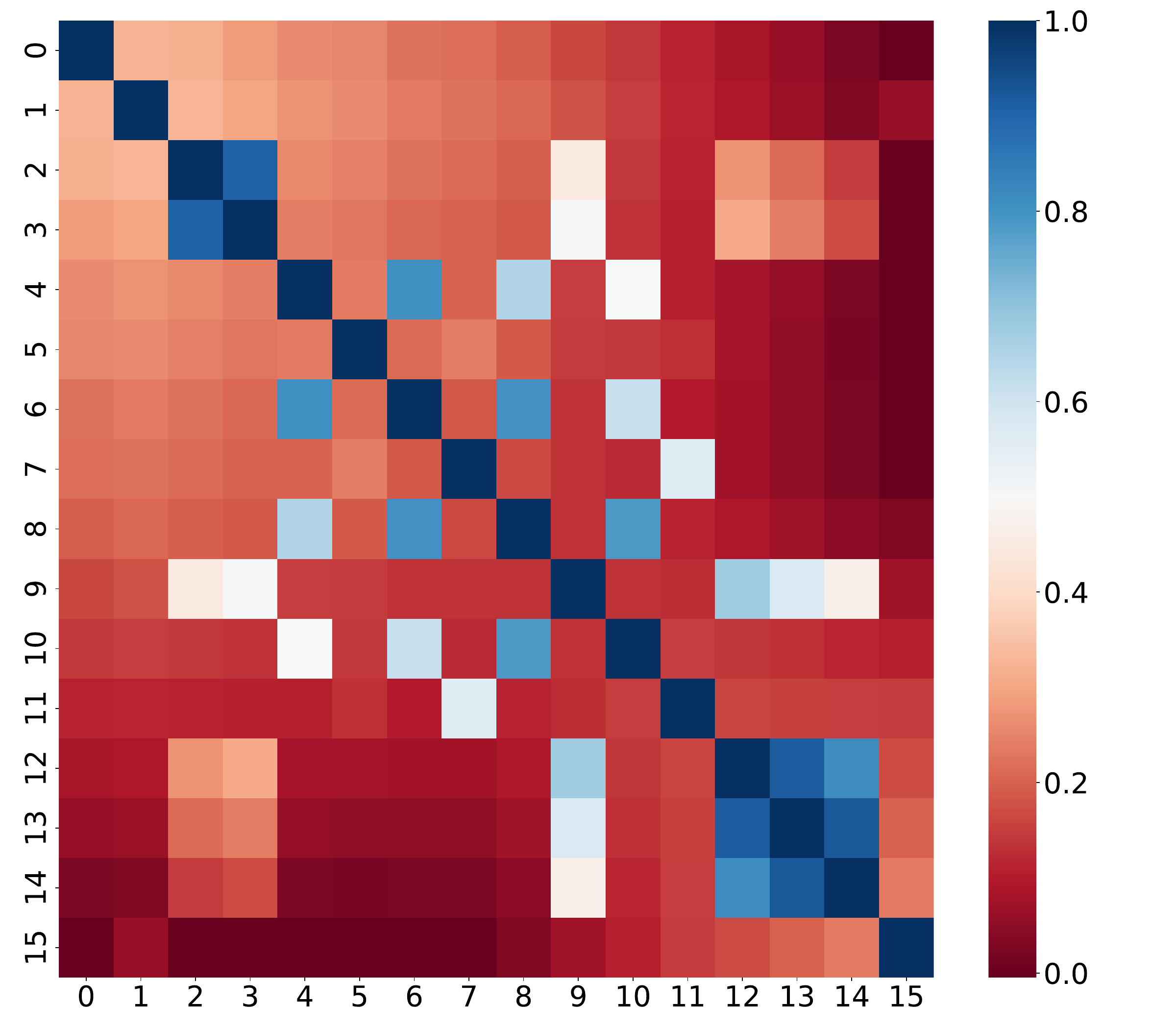}
\label{fig:layer-wise similarity of swiglu for moe}
\end{minipage}
}
\hfill
\subfigure[PolyNorm, MoE]{ %
\begin{minipage}{0.23\textwidth}
\centering    %
\includegraphics[width=\linewidth]{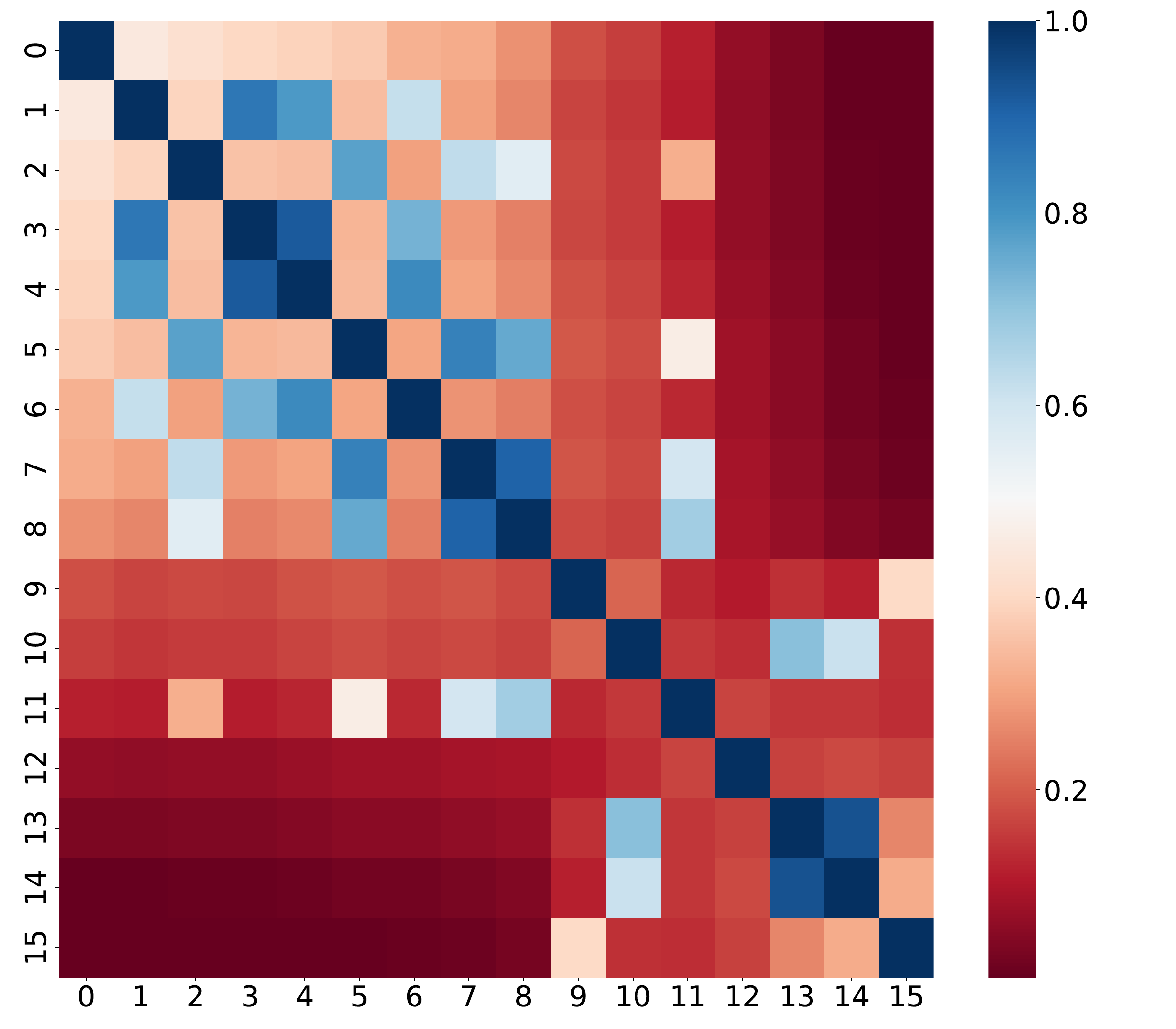}
\label{fig:layer-wise similarity of polynorm for moe}
\end{minipage}
}
\caption{Layer-wise cosine similarity of hidden states. \ref{fig:layer-wise similarity of swiglu} $\&$\ref{fig:layer-wise similarity of polynorm}: for 1B dense models with SwiGLU and PolyNorm, respectively.
\ref{fig:layer-wise similarity of swiglu for moe} $\&$
\ref{fig:layer-wise similarity of polynorm for moe}: for MoE models with SwiGLU and PolyNorm, respectively.
}
\label{fig:layer-wise cosine similarity}
\end{figure}

\textbf{Training Stability.} Through extensive experiments, we find both PolyReLU and PolyNorm maintain a stable training process within transformer architectures. Our analysis indicates the combination of normalization operators in transformers and standard gradient clipping strategies effectively stabilizes training dynamics. We specifically design PolyNorm to address potential instability associated with BF16/FP16 precision formats, incorporating normalization operators that rescale powers to a manageable range, thus preventing excessively large or small values. This is particularly beneficial for FP16 training, as demonstrated in Appendix \ref{sec:vision}. In contrast, PolyReLU lacks this normalization feature, which may result in stability issues in non-transformer architectures like ResNet.
As shown in Figure \ref{fig:ablation-different orders}, a 3rd order (our default setting) is sufficient. Based on these findings, we recommend the following configurations: (1) For transformer-based models, use either PolyNorm or PolyReLU. (2) For non-transformer models or those with lower stability, prefer PolyNorm.

\textbf{Computational Overhead and Memory Footprint.}
We provide detailed analyses of the runtime and memory overhead for the proposed activation functions, including FLOPs ratios and memory consumption, which are included in Appendix \ref{sec:computation_and_memory}. Overall, after applying the gradient checkpointing technique, the overhead and memory footprint are acceptable, and there is negligible difference in the training budget required compared to the widely used SwiGLU.

\section{Related Work}
\label{sec:related work}

\textbf{The design of activation functions} has been a critical area of research in neural networks, directly influencing the performance and capabilities of deep learning models. Early activation functions like Sigmoid and Tanh were widely used due to their smooth nonlinear transformations \citep{goodfellow2016deep}. However, these functions faced challenges such as vanishing gradients, making it difficult to train deep networks effectively. The introduction of the Rectified Linear Unit (ReLU) \citep{nair2010rectified} mitigated some of these issues by offering a simple, non-saturating nonlinearity, which has since become a standard in many deep learning applications. Variants of ReLU, such as Leaky ReLU \citep{maas2013rectifier} and Parametric ReLU (PReLU) \citep{he2015delving}, were developed to address the ``dying ReLU'' problem by allowing a small, non-zero gradient when the input is negative. Other functions, like the Exponential Linear Unit (ELU) \citep{clevert2015fast}, aimed to provide smoother activation profiles, resulting in better generalization and faster convergence in certain tasks. Moreover, \cite{manessi2018learning} proposed a combination of weighted base activation functions for further enhancement. 

\textbf{Polynomial activation functions} \citep{hornik1989multilayer,oh2003polynomial}, although less commonly used, have been studied in various contexts for their ability to model higher-order, complex relationships more effectively. 
For instance, \cite{lokhande2020generating} introduced Hermite polynomial activations to improve pseudo-label accuracy, while \cite{chrysos2020p} proposed polynomial networks, $\Pi$-nets, which apply to various domains such as image and audio processing. Building on this, \cite{chrysos2023regularization} utilized regularization techniques to enhance the performance of polynomial networks. These works highlight the potential of polynomial functions to increase the expressiveness of neural networks by capturing intricate, higher-order interactions. On the theoretical front, the expressivity and approximation power of polynomial functions have been rigorously explored \citep{kileel2019expressive, kidger2020universal, kubjas2024geometry}. Additionally, \citep{li2019better} investigated the approximation capabilities of rectified power units (\ie ReLU$^2$), demonstrating that they achieve the same approximation rate as PolyReLU.

\textbf{The choice of activation function in transformers} has also become an important area of research. Originally developed for natural language processing, transformers \citep{vaswani2017attention} have been effectively adapted for diverse tasks, including image recognition, speech processing, and reinforcement learning. Despite their broad applicability, the activation functions predominantly utilized in transformers, ReLU and GELU, have seen minimal evolution. Recent studies, however, have begun to explore alternatives to these conventional activations. For example, the Swish activation \citep{ramachandran2017searching, shazeer2020glu} and the Mish activation \citep{misra2019mish} are smooth and non-monotonic functions that offer potential benefits in model performance and training stability. Additionally, Gated Linear Units (GLU) were proposed by \cite{dauphin2017language}, with SwiGLU \citep{shazeer2020glu}, a prominent variant, being used in models such as LLaMA-Series \citep{touvron2023llama}.

\section{Conclusions}

In this paper, we introduce the Polynomial Composition Activation (PolyCom) and demonstrate its effectiveness within transformer models. By enabling the capture of higher-order interactions, PolyCom enhances both the accuracy and convergence rates of these models. Our experiments, conducted across different large language model architectures and multiple benchmarking datasets, confirm that PolyCom consistently outperforms conventional activation functions. Furthermore, ablation studies indicate that PolyCom increases model expressivity by elevating weight rank and reducing redundancy across layers. These findings underscore the significant potential of polynomial-based activations to improve transformer models, thereby paving the way for future research endeavors.

\subsubsection*{Acknowledgments}
Jinwen Ma was supported by the Natural Science Foundation of China under grant 62071171.
\bibliography{iclr2025_conference}
\bibliographystyle{iclr2025_conference}

\newpage
\appendix
\counterwithin{lemma}{section}
\counterwithin{corollary}{section}

\section{Omitted Proofs}\label{sec:proofs}
In this section, we provide the proofs that were omitted in the main body of the paper. The following proofs build upon the work of \cite{yarotsky2017error,telgarsky2017neural,boulle2020rational}.

\subsection{Proof of Lemma \ref{lem:degradation cases}}

\begin{proof}[Proof of Lemma \ref{lem:degradation cases}]
For ReLU activation, set $a_1 =1$, $a_i = 0, \forall i\neq 1$, leading to $\PolyReLU(x) = \ReLU(x)$.

For ReLU$^2$ activation, set $a_2 =1$, $a_i = 0, \forall i\neq 2$, giving $\PolyReLU(x) = \ReLU^2(x)$.

For a general polynomial activation, observe that for $\forall x \in \mathbb{R}$ and $i \in \mathbb{N}$: 
\begin{equation}
    x^i = \ReLU^i(x) + (-1)^i\ReLU^i(-x), \quad \forall x \in \sR, \forall i \in \sN.
\end{equation}

Thus, for any polynomial activation of order $r$,
\begin{equation}
    \Poly(x) = \PolyReLU_1(x)+ \PolyReLU_2(-x),
\end{equation}
where $\PolyReLU_1(x) = \sum_{i=0}^{r} a_i \ReLU^i(x)$ and $\PolyReLU_2(x) = \sum_{i=1}^{r} (-1)^i a_i\ReLU^i(x)$.
\end{proof}

\subsection{Proof of Theorem \ref{thm:ployrelu2relu}}
The proof is an elementary extension of Lemma \ref{lem:degradation cases}.
\begin{proof}[Proof of Theorem \ref{thm:ployrelu2relu}]
Using Lemma \ref{lem:degradation cases}, we can  represent the ReLU activation on $\sR$ using a PolyReLU activation. Thus, we replace each ReLU activation in the ReLU network $f$ with PolyReLU to construct a new network $g$. Obviously, such $g$ satisfies the above requirements. Hence, the size and structure remain equivalent, and $g$ serves as the PolyReLU network equivalent to the ReLU network.
\end{proof}

\subsection{Proof of Lemma \ref{lem:approximate ployrelu}}
The proof of Lemma \ref{lem:approximate ployrelu} leverages Lemma 3.4 from \cite{telgarsky2017neural}, which we state below.
\begin{lemma}[Lemma 3.4 in \cite{telgarsky2017neural}]\label{lem:approximate poly}
Let $\epsilon \in (0, 1)$ be given. Suppose $p:[0,1]^d \rightarrow [-1,1]$ be a $r$ order polynomial with $s$ monomials and coefficients within $[-1,1]$. Then there exists a ReLU network $f:[0,1]^d \rightarrow [-1,1]$ of size $O(\min\{sr\ln(sr/\epsilon),sd \ln^2(dsr/\epsilon)\})$  such that $\max_{\vx \in [0,1]^d} |p(\vx) -f(\vx)|<\epsilon$.
    
\end{lemma}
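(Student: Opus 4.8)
The plan is to prove the lemma constructively, assembling the polynomial out of a single reusable gadget: a ReLU subnetwork that approximately multiplies two reals in $[-1,1]$. First I would establish this gadget via Yarotsky's sawtooth construction. Writing $g$ for the tent map (realizable by two ReLU units) and $g_k$ for its $k$-fold self-composition, the telescoping identity $x^2 \approx x - \sum_{k=1}^{m} 2^{-2k} g_k(x)$ holds on $[0,1]$ with error $O(2^{-2m})$; since the $g_k$ are produced by a single chain of tent maps with a tap at each depth, this yields a squaring network of size $O(m) = O(\ln(1/\delta))$ for accuracy $\delta$. Polarization, $uv = \tfrac14[(u+v)^2 - (u-v)^2]$, then converts squaring into an approximate multiplication gate $\widehat{\mathrm{mult}}$ of the same size $O(\ln(1/\delta))$, with $|\widehat{\mathrm{mult}}(u,v) - uv| \le \delta$ for $u,v\in[-1,1]$. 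Summation and scaling by coefficients in $[-1,1]$ are exact and absorbed into edge weights, so the only approximation incurred is inside the multiplication gates.

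With the gadget in hand I would realize $p$ in two different ways and keep the cheaper. For the first bound I treat each of the $s$ monomials as an ordered product of at most $r$ factors and evaluate it by a chain of $r-1$ multiplication gates, clipping each partial product back into $[-1,1]$ with a constant-size ReLU gadget. For the second bound I exploit that a monomial $x_1^{a_1}\cdots x_d^{a_d}$ involves at most $d$ distinct variables: I first form each power $x_i^{a_i}$ by repeated squaring using $O(\ln r)$ gates, then combine the at most $d$ powers with $O(d)$ further gates, giving $O(d\ln r)$ gates per monomial. Choosing a common per-gate accuracy $\delta$ polynomially small in $\epsilon/(dsr)$ makes each gate have size $O(\ln(dsr/\epsilon))$; the first construction then has total size $O(sr\ln(sr/\epsilon))$ and the second $O(sd\,\ln r\,\ln(dsr/\epsilon)) = O(sd\ln^2(dsr/\epsilon))$, since $\ln r \le \ln(dsr/\epsilon)$. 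Taking the smaller of the two and clipping the final output into $[-1,1]$ produces the stated bound.

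The step I expect to be the main obstacle is the error-propagation bookkeeping that justifies the choice of $\delta$. I would argue inductively that, because every intermediate value is clipped into $[-1,1]$ and multiplication there is $1$-Lipschitz in each argument, the error of a chain of $k$ inexact products obeys $e_k \le e_{k-1} + \delta$, hence $e_k \le k\delta$; for the repeated-squaring construction a parallel induction over the $O(\ln r)$ squaring levels gives an accumulated per-monomial error of at most $\mathrm{poly}(r,d)\,\delta$. Summing the $s$ monomials against coefficients of magnitude at most $1$ multiplies these per-monomial errors by $s$, so it suffices to take $\delta$ inversely polynomial in $sr$ (respectively in $s\cdot\mathrm{poly}(r,d)$) to force the total error below $\epsilon$. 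This is exactly what keeps each gate's logarithmic size at $O(\ln(sr/\epsilon))$ and $O(\ln(dsr/\epsilon))$ respectively, and what must be tracked carefully to land both bounds with the correct exponents on the logarithm.
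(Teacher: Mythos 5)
Your proposal is correct, but note that there is no in-paper proof to compare it against: the paper imports this statement verbatim as Lemma~3.4 of \cite{telgarsky2017neural} and uses it as a black box (its only role is in the proof of Lemma~\ref{lem:approximate ployrelu}). What you have written is essentially a faithful reconstruction of Telgarsky's original argument: the sawtooth telescoping identity for squaring (due to Yarotsky), polarization $uv=\tfrac14\bigl[(u+v)^2-(u-v)^2\bigr]$ to obtain an $O(\ln(1/\delta))$-size multiplication gate, and then the two per-monomial evaluation strategies --- a sequential chain of at most $r-1$ gates versus per-variable repeated squaring in $O(\ln a_i)$ gates followed by $O(d)$ combining gates --- whose minimum yields exactly the two terms in the stated bound. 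Your error bookkeeping is also the right one: $1$-Lipschitzness of multiplication on $[-1,1]$ (after clipping, which is exact in ReLU via $\min(x,1)=1-\ReLU(1-x)$ and $\max(x,-1)=\ReLU(x+1)-1$) gives additive error accumulation $e_k\le e_{k-1}+\delta$ for the chain, while the $2$-Lipschitzness of $u\mapsto u^2$ is what forces the $\mathrm{poly}(r,d)$ amplification in the repeated-squaring route and hence the $\ln^2$ in the second bound. Two small points to tighten. First, your sentence choosing ``a common per-gate accuracy $\delta$ polynomially small in $\epsilon/(dsr)$'' is slightly at odds with the first bound you then claim: with $\delta=\epsilon/(dsr)$ the chain construction has size $O(sr\ln(dsr/\epsilon))$, which is not $O(sr\ln(sr/\epsilon))$ when $d$ is enormous relative to $sr/\epsilon$; since the chain's accumulated error $sr\delta$ has no $d$-dependence, you should take $\delta=\Theta(\epsilon/(sr))$ for that construction and a separate, smaller tolerance for the repeated-squaring one. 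Second, the squaring gadget is built on $[0,1]$ while polarization feeds it $u\pm v\in[-2,2]$, so a rescaling (or a symmetric extension of the squaring network) is needed before composing; this is routine but should be stated. Neither issue affects correctness, and your argument lands both bounds with the correct exponents on the logarithms.
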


Using this result, we now proceed with the proof of Lemma \ref{lem:approximate ployrelu}.

\begin{proof}[Proof of Lemma \ref{lem:approximate ployrelu}]
First, we observe that $\PolyReLU(x) = \Poly(\ReLU(x)$, where $Poly(x)= \sum_{i=0}^{r} a_i x^i$ for $x \in [-1,1]$. By Lemma \ref{lem:approximate poly}, there exists a ReLU network $f_1:[0,1] \rightarrow [-1,1]$ of size $O(\ln^2(1/\epsilon))$ such that
\begin{equation}
    \max_{x \in [0,1]} |f_1(x)-\Poly(x)| < \epsilon.
\end{equation}

Thus, we construct $f = f_1 \circ \ReLU $ for inputs $x \in [-1,1]$. This yields that
\begin{equation}
\begin{aligned}
\max_{x \in [-1,1]}    |f(x)-\PolyReLU(x)| &= \max_{x \in [-1,1]}|f_1 \circ \ReLU(x)-\PolyReLU(x)| \\
    &= \max_{x \in [-1,1]}|f_1 (\ReLU(x))-\Poly(\ReLU(x))| \\
    &= \max_{x \in [0,1]}|f_1 (x)-\Poly(x)| \\
    & < \epsilon.
\end{aligned} 
\end{equation}
Since $f_1$ is a ReLU network, the constructed function $f = f_1 \circ \ReLU $ is also a ReLU network, completing the proof.
\end{proof}

\subsection{Proof of Theorem \ref{thm:relu2ployrelu}}

The lower bound of Theorem \ref{thm:relu2ployrelu} follows directly from Theorem 11 in \cite{liang2017deep}, restated here for clarity: 
\begin{lemma}[Theorem 11 in \cite{liang2017deep}] \label{lem:lower bound}
Suppose function $f:[0,1]^d \rightarrow \sR$ is differentiable and strongly convex. Let $\epsilon \in (0,1)$ be given and $\tilde{f}$ be a ReLU network. If $\max_{\vx \in [0,1]^d} |f(\vx) - \tilde{f}(\vx)|$, then the network size of $\tilde{f}$ is at least $\Omega(\ln(1/\epsilon))$.
\end{lemma}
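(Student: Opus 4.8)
\textbf{Proof proposal for Lemma \ref{lem:lower bound}.} I read the hypothesis as asserting that $\tilde f$ attains $\max_{\vx\in[0,1]^d}|f(\vx)-\tilde f(\vx)|\le\epsilon$ (the displayed inequality in the statement is missing its right-hand side). The plan is to combine two ingredients: (i) a ReLU network of size $s$ computes a continuous piecewise-linear function whose number of affine pieces along any line is at most $2^{O(s)}$; and (ii) a piecewise-linear function with few pieces cannot approximate a strongly convex function to within a small tolerance. The exponential piece count in (i) is exactly what turns the polynomial-in-$1/\epsilon$ piece requirement coming from (ii) into a \emph{logarithmic} lower bound $\Omega(\ln(1/\epsilon))$ on the size.

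First I would reduce to one dimension. Fix a line segment $\{\vx_0+t\vu: t\in[0,1]\}\subseteq[0,1]^d$ on which $f$ is defined. Strong convexity of $f$ is inherited along every direction, so $t\mapsto f(\vx_0+t\vu)$ is a differentiable, $\mu$-strongly convex univariate function, while $t\mapsto\tilde f(\vx_0+t\vu)$ is a univariate continuous piecewise-linear function. Since the $L^\infty$ error on the segment lower-bounds the error over all of $[0,1]^d$, it suffices to prove the size bound for the induced univariate problem. For the piece count I would prove by induction over the layers that a ReLU network of size $s$ yields at most $2^{O(s)}$ affine pieces along the segment: the pre-activation of each neuron is an affine combination of the piecewise-linear outputs of the previous layer, and composing with a single ReLU at most doubles the number of breakpoints contributed; accumulating over all $s$ units gives $P\le 2^{cs}$ for an absolute constant $c$.

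Next I would lower-bound the approximation error of any $P$-piece function against the strongly convex target. The core estimate is that on a subinterval of length $\ell$, \emph{any} affine function deviates from a $\mu$-strongly convex function by at least $\mu\ell^2/16$ in $L^\infty$: evaluating at the two endpoints and the midpoint $m$, strong convexity gives $f(\text{left})+f(\text{right})-2f(m)\ge\mu\ell^2/4$, whereas the same alternating combination vanishes on any affine function, forcing the maximal discrepancy at these three points to be at least $\mu\ell^2/16$. If $\tilde f$ partitions $[0,1]$ into pieces of lengths $\ell_1,\dots,\ell_P$, the worst-case error is at least $\tfrac{\mu}{16}\max_i\ell_i^2$, minimized when the pieces are equal ($\ell_i=1/P$), giving an error at least $\mu/(16P^2)$. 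Achieving error below $\epsilon$ therefore forces $P=\Omega(\epsilon^{-1/2})$. Combining with $P\le 2^{cs}$ yields $2^{cs}=\Omega(\epsilon^{-1/2})$, hence $cs\ge\tfrac12\ln(1/\epsilon)-O(1)$, i.e. $s=\Omega(\ln(1/\epsilon))$.

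The main obstacle is the piece-counting step: extracting a clean exponential (rather than merely super-polynomial) dependence of the number of linear regions on the network size is precisely what produces the logarithm, and one must track how breakpoints propagate and accumulate across layers uniformly in whatever depth/width split is encoded by the scalar notion of ``size''. The convexity estimate and the final arithmetic are routine by comparison.
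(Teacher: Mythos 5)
Your proposal is correct, and it is essentially the argument behind the cited result: the paper itself offers no proof of this lemma (it is imported verbatim as Theorem~11 of \cite{liang2017deep}), and that reference proves it exactly as you do, by restricting to a line, bounding the number of affine pieces of a size-$s$ ReLU network by $2^{O(s)}$, and using the strong-convexity midpoint estimate to force $\Omega(\epsilon^{-1/2})$ pieces, whence $s=\Omega(\ln(1/\epsilon))$. You also correctly repaired the statement's typo by reading the hypothesis as $\max_{\vx\in[0,1]^d}|f(\vx)-\tilde f(\vx)|\le\epsilon$.
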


Lemma \ref{lem:lower bound} shows that approximating the quadratic function $x^2$ with an error tolerance $\epsilon$ requires a network of size at least $\Omega(\ln(1/\epsilon))$. Since $x^2$ on $[0,1]^d$ is a degradation case of PolyReLU, any ReLU network approximating PolyReLU with error $\epsilon$ must also be at least $\Omega(\ln(1/\epsilon))$ in size. The upper bound is proved in the following.

\begin{proof}[Proof of Theorem \ref{thm:relu2ployrelu}]
Denote $g_i$ as the $i$-th layer of PolyReLU neteeork $f$ for $1\leq i\leq L$, such that
$$g = g_L\circ g_{L-1}\circ \cdots \circ g_1.$$

For each neuron, since $\|a\|_1+b\leq 1$, it follows that
\begin{equation}
    |a^\top \vx +b | \leq |a^\top \vx| +|b|\leq \|a\|_1 \|\vx\|_{\infty} +|b| \leq 1, \forall \vx \in \{\vx| \|\vx\|_{\infty} \leq 1\}. 
\end{equation}

Additionally, note that the range of PolyReLU is $[-1,1]$. Hence, by induction, the output of each neuron remains within $[-1,1]$. For each subnetwork $g_i$, by applying Lemma \ref{lem:approximate ployrelu}, we can construct a corresponding ReLU network $f_i$ by replacing each PolyReLU activation $p_{i,j}$ in $g_i$ with a ReLU activation. Specifically, for any $i \in [L]$\footnote{We use the notation $[L]$ to denote the set $\{1,2,\dots,L\}$.} and $j \in [K]$, there exists a ReLU network $f_{i,j}: [-1,1] \rightarrow [-1,1]$ that approximates the PolyReLU activation $p_{i,j}$ with given tolerance $\epsilon_i >0$.

Thus, the network $f_i$ is obtained by replacing each PolyReLU activation $p_{i,j}$ in $g_i$ with its ReLU approximation $f_{i,j}$. Obviously, $f_i$ is a ReLU network whose output dimensions are in the range $[-1,1]$. 

Next, we give the approximation error bound. Denote $h^g_i = g_i\circ \cdots \circ g_1$ and $h^f_i = f_i\circ \cdots \circ f_1$ for $i \in [L]$. For the sake of brevity, we assume $h^g_0=h^f_0$ as the identity map in $[-1,1]^d$. Hence, we have $h^g_i = g_i\circ \cdots \circ g_0$ and $h^f_i = f_i\circ \cdots \circ f_0$. Suppose $x \mapsto p_{i,j}(a_{i,j}^\top h^g_{i-1}+b_{i,j})$ be the output of $j$-th neuron of $g_i$. Denote the approximation between the PolyReLU network and the ReLU network at $i$-th layer and $j$-th neuron as $e_{i,j}$. And we use $e_{i}=\max_{j\in [K]} e_{i,j}$ to denote the approximation error between the PolyReLU network and the ReLU network at $i$-th layer. Then for any $ i \in [L]$, we have that
\begin{equation}
\begin{aligned}
    e_{i,j} =& \max_{x\in [-1,1]^d}\left|h^g_{i,j}(\vx) - h^f_{i,j}(\vx)\right| \\
    =&\max_{x\in [-1,1]^d} \left|p_{i,j}(a_{i,j}^\top h^g_{i-1}(\vx)+b_{i,j}) - f_{i,j}(a_{i,j}^\top h^f_{i-1}(\vx)+b_{i,j})\right| \\
    =& \max_{x\in [-1,1]^d} \biggl| p_{i,j}(a_{i,j}^\top h^g_{i-1}(\vx)+b_{i,j}) - p_{i,j}(a_{i,j}^\top h^f_{i-1}(\vx)+b_{i,j})  \\
    &+ p_{i,j}(a_{i,j}^\top h^f_{i-1}(\vx)+b_{i,j})- f_{i,j}(a_{i,j}^\top h^f_{i-1}(\vx)+b_{i,j}) \biggr| \\
    \leq & \max_{x\in [-1,1]^d} \left|p_{i,j}(a_{i,j}^\top h^g_{i-1}(\vx)+b_{i,j}) - p_{i,j}(a_{i,j}^\top h^f_{i-1}(\vx)+b_{i,j})\right| \\
    & +\max_{x\in [-1,1]^d} \left|p_{i,j}(a_{i,j}^\top h^f_{i-1}(\vx)+b_{i,j})- f_{i,j}(a_{i,j}^\top h^f_{i-1}(\vx)+b_{i,j})\right| \\
    \leq& \max_{x\in [-1,1]^d}\alpha \left|(a_{i,j}^\top h^g_{i-1}(\vx)+b_{i,j}) -(a_{i,j}^\top h^f_{i-1}(\vx)+b_{i,j})\right| + \epsilon_i \\
    \leq & \alpha \max_{x\in [-1,1]^d} \|a_{i,j}\|_1\left\|h^g_{i-1}(\vx)-h^f_{i-1}(\vx)\right\|_{\infty} + \epsilon_i \\
    \leq & \alpha\max_{x\in [-1,1]^d}\left\|h^g_{i-1}(\vx)-h^f_{i-1}(\vx)\right\|_{\infty}+ \epsilon_i.
\end{aligned}
\end{equation}

The first inequality is using the triangular inequality. The second inequality holds because the Lipschitz constant of $p_{i,j}$ is $\alpha$ and the ReLU subnetwork $f_{i,j}$ approximates $p_{i,j}$ with error $\epsilon_i$. In the fourth inequality, we used Hölder's inequality.
Since $\|a_{i,j}\|_1 \leq \|a_{i,j}\|_1 +|b_{i,j}| \leq 1$, the fifth inequality holds. 

Therefore, we derive the following approximation bound
\begin{equation}
    e_i = \max_{j\in [K]} e_{i,j}\leq \alpha\max_{x\in [-1,1]^d}\left\|h^g_{i-1}(\vx)-h^f_{i-1}(\vx)\right\|_{\infty}+ \epsilon_i = \alpha e_{i-1} +\epsilon_i,
\end{equation}
for $\forall i \in [L]$. Since $h^g_0=h^f_0$, we have $e_0 = 0$. Let $\epsilon_i = \epsilon/(L\alpha^{L-i})$ for $\forall i \in [L]$. It follows that
\begin{equation}
    e_i \leq \frac{i\epsilon}{L\alpha^{L-i}}, \quad \forall i \in [L].
\end{equation}
Hence, the final error at the last layer is bounded by $e_L \leq \epsilon$. 

Last, we need to estimate the size of the ReLU network $f$. By Lemma \ref{lem:approximate ployrelu}, the size of each ReLU subnetwork $f_{i,j}$ is $O(\ln^2(L\alpha^{L-i}/\epsilon))$. Therefore, the total size of the ReLU network $f$ is
\begin{equation}
    O\left(\sum_{i=1}^L K \ln^2\left(\frac{L\alpha^{L-i}}{\epsilon}\right) \right)= O\left(KL \ln^2\left(\frac{L\alpha^{L}}{\epsilon}\right) \right),
\end{equation}
where we use the fact that
\begin{equation}
 \sum_{i=1}^L \ln^2\left(\frac{L\alpha^{L-i}}{\epsilon}\right) = \sum_{i=1}^L \left(\ln\left(\frac{L\alpha^{L}}{\epsilon}\right) -i\ln\alpha\right)^2 = O\left(L\ln^2\left(\frac{L\alpha^{L}}{\epsilon}\right) \right).   
\end{equation}
This completes the proof.
\end{proof}

\subsection{Proof of Theorem \ref{thm:universal approximation}}
Before proving Theorem \ref{thm:universal approximation}, we begin by introducing a few useful lemmas.

\begin{lemma}[Proposition 1 in \cite{yarotsky2017error}]\label{lem:continuous piece-wise linear function}
Let $M \in \sN$ and $ \rho: \sR \rightarrow \sR$ be any continuous piece-wise linear function with $M$ breakpoints. Then the following two statements hold:
\begin{itemize}
    \item For a network with activation $\rho$, depth $L$ and width $K$, there exists a ReLU network with the same depth $L$ and width $O(MK)$ that computes the same function as the original network.
    \item Conversely, if a ReLU network has depth $L$ and width $K$, there exists a network with activation $\rho$, depth $L$ and width $K$ that computes the same function on a bounded input domain $\gD$.
\end{itemize}
\end{lemma}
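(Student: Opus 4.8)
The plan is to reduce the lemma to the elementary, two-way correspondence between a continuous piecewise-linear (CPWL) nonlinearity $\rho$ with $M$ breakpoints and the single-kink function $\ReLU$, and then to propagate this pointwise identity through an entire network while tracking depth and width. First I would record the exact decomposition of $\rho$. Writing its breakpoints as $t_1 < \cdots < t_M$ and the slopes on the successive linear pieces as $s_0, s_1, \dots, s_M$, one has
\begin{equation}
    \rho(x) = c + s_0 x + \sum_{i=1}^{M}(s_i - s_{i-1})\,\ReLU(x - t_i),
\end{equation}
because each summand contributes precisely the slope increment $s_i - s_{i-1}$ at $t_i$ and nothing before it, and $c$ fixes the offset. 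This exhibits $\rho$ as an affine function plus an affine combination of $M$ shifted ReLUs.

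For the first statement I would apply this identity neuron by neuron. A single $\rho$-neuron $\vx \mapsto \rho(\va^\top \vx + b)$ becomes an affine function of $\vx$ together with the $M$ ReLU units $\vx \mapsto \ReLU(\va^\top \vx + b - t_i)$. Replacing each of the $K$ neurons of a given layer by its $M$ ReLU surrogates therefore turns that layer into a ReLU layer of width $O(MK)$, while the affine remainder $c + s_0(\va^\top\vx + b)$ is merged into the weights and bias of the following layer. Doing this independently in all $L$ layers leaves the depth equal to $L$ and gives total width $O(MK)$, which is the claimed bound.

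For the converse I would reverse the roles, using that a nonlinear CPWL $\rho$ has at least one genuine breakpoint $t_k$ with $s_k \neq s_{k-1}$. Because the input domain $\gD$ is bounded, I can pick an affine reparametrization $u \mapsto \alpha u + \beta$ so that the argument of $\rho$ sweeps only the two pieces adjacent to $t_k$ and never reaches any other breakpoint; on that range $\rho(\alpha u + \beta)$ is a two-piece function whose kink can be solved for $\ReLU$, giving an identity of the form
\begin{equation}
    \ReLU(u) = A\,\rho(\alpha u + \beta) + B u + C,
\end{equation}
valid for all $u = \va^\top \vx + b$ in the relevant bounded range. Substituting this for each ReLU neuron, each neuron is simulated by exactly one $\rho$-neuron plus an affine correction that is absorbed into the surrounding (free) affine maps of the network, so depth $L$ and width $K$ are both retained.

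The main obstacle in both directions is the careful accounting of the affine (linear-plus-constant) remainders. In the forward direction I must check that folding the $c + s_0 x$ pieces into adjacent layers neither increases the depth nor inflates the width past $O(MK)$; in the converse direction the delicate point is that the corrective term $Bu + C$ must be hidden inside the existing affine maps so that the width factor stays exactly one rather than merely bounded, which is precisely where the restriction to a bounded domain $\gD$ is indispensable --- it is what allows a single breakpoint of $\rho$ to be isolated and guarantees that the rescaling constants $\alpha, \beta$ and the correction $A, B, C$ are finite and well defined. Once this bookkeeping is in place, composing the per-layer simulations yields the two desired networks.
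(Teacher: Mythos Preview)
Your proposal is a correct and clean sketch of the standard argument. However, the paper itself does not prove this lemma at all: it is quoted verbatim as Proposition~1 of \cite{yarotsky2017error} and used as a black box (specifically, only the second bullet is invoked, combined with Lemma~\ref{lem:degradation cases}, to obtain Corollary~\ref{corollay:polyrelu to  piece-wise linear function}). So there is no ``paper's own proof'' to compare against; your write-up is essentially the argument Yarotsky gives, with the same two ingredients --- the decomposition $\rho(x)=c+s_0x+\sum_i(s_i-s_{i-1})\ReLU(x-t_i)$ for the forward direction, and the isolation of a single breakpoint of $\rho$ via an affine rescaling on a bounded domain for the converse.
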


This result, Combined with Lemma \ref{lem:degradation cases},  directly leads to the following corollary, which demonstrates that PolyReLU networks can represent any piece-wise linear function exactly on $\sR$.

\begin{corollary}\label{corollay:polyrelu to  piece-wise linear function}
Let $M \in \sN$ and $ \rho: \sR \rightarrow \sR$ be any continuous piece-wise linear function with $M$ breakpoints. Then there exists a PolyReLU network $g$ of size $O(M)$ such that
$$
\rho (x) = g(x), \quad \forall x \in \sR.
$$
    
\end{corollary}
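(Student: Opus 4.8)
The plan is to combine the two results that Corollary~\ref{corollay:polyrelu to  piece-wise linear function} explicitly advertises as its ingredients: Lemma~\ref{lem:continuous piece-wise linear function} (Proposition~1 of \cite{yarotsky2017error}) and Lemma~\ref{lem:degradation cases}. The goal is to produce a PolyReLU network of size $O(M)$ that equals the given continuous piece-wise linear $\rho$ with $M$ breakpoints exactly on all of $\sR$. The natural route is a two-stage reduction: first express $\rho$ as a small ReLU network, then convert that ReLU network into a PolyReLU network of the same size.

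First I would invoke the second bullet of Lemma~\ref{lem:continuous piece-wise linear function} in reverse, or more directly the \emph{first} bullet, to realize $\rho$ as a shallow ReLU network. A continuous piece-wise linear function with $M$ breakpoints can be written as a single hidden layer of ReLU units: one writes $\rho(x) = c_0 + c_1 x + \sum_{k=1}^{M} d_k \ReLU(x - t_k)$, where $t_1,\dots,t_M$ are the breakpoints and the coefficients $d_k$ are the jumps in slope. This is an exact representation valid for every $x \in \sR$, and it is a ReLU network of size $O(M)$ (width $O(M)$, depth $O(1)$). The linear term $c_1 x$ and constant $c_0$ are absorbed into the affine read-out layer, so no extra nonlinearity is needed.

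Next I would apply Lemma~\ref{lem:degradation cases}, which states that $\ReLU$ itself is a special case of $\PolyReLU$ (take $a_1 = 1$ and all other $a_i = 0$). Replacing each $\ReLU$ unit in the size-$O(M)$ network above by the corresponding degenerate $\PolyReLU$ unit yields a PolyReLU network computing exactly the same function, with identical architecture and hence size $O(M)$. Because the $\ReLU = \PolyReLU$ identity holds pointwise on all of $\sR$, the resulting equality $\rho(x) = g(x)$ holds for every $x \in \sR$, as required — not merely on a bounded domain. This gives the claimed PolyReLU network $g$ of size $O(M)$.

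I do not expect a genuine obstacle here, since the corollary is essentially a bookkeeping composition of two already-established facts; the ``hard part'' is only the minor care needed to confirm that both ingredients hold on all of $\sR$ rather than on a compact set. In particular, the slope-decomposition representation of $\rho$ and the $\ReLU$-as-$\PolyReLU$ substitution are both exact and global, so no truncation to $[-1,1]^d$ or bounded input domain $\gD$ is introduced, and the size bound $O(M)$ is preserved exactly because neither step changes the number of units. The only point worth stating explicitly is that the single-hidden-layer ReLU realization of a piece-wise linear function with $M$ breakpoints indeed uses $O(M)$ units, which is immediate from the jump-in-slope formula.
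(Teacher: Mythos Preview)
Your proposal is correct and follows essentially the same approach as the paper, which simply states that the corollary follows directly from combining Lemma~\ref{lem:continuous piece-wise linear function} with Lemma~\ref{lem:degradation cases}. Your explicit slope-jump decomposition $\rho(x) = c_0 + c_1 x + \sum_{k=1}^{M} d_k \ReLU(x - t_k)$ is exactly the content of applying the first bullet of Lemma~\ref{lem:continuous piece-wise linear function} to a trivial one-neuron $\rho$-network, and your subsequent ReLU-to-PolyReLU substitution via Lemma~\ref{lem:degradation cases} is precisely the second ingredient the paper invokes.
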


In a similar manner to Proposition 10 in \cite{boulle2020rational}, we can show that  PolyReLU networks can represent powers $x^n$ exactly for any $n \in \sN$. 

\begin{lemma}\label{lem:polyrelu2poly}
Suppose $n, r\in \sN$ and $r\geq 2$. Then $x^n$ can be represented exactly by a PolyReLU network $g$ with an $r$-th order PolyReLU activation and size $O(\ln^2(n))$.
    
\end{lemma}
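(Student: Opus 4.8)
The plan is to build $x^n$ from a single primitive gadget---exact squaring---and then apply binary exponentiation, mirroring Proposition 10 in \cite{boulle2020rational} but exploiting the fact that PolyReLU represents the square \emph{exactly} rather than merely approximately. First I would record the base gadget: by Lemma \ref{lem:degradation cases} an $r$-th order PolyReLU with $r\ge 2$ represents every polynomial of degree at most $r$, and in particular the identity $x^2 = \ReLU^2(x)+\ReLU^2(-x)$ holds for all $x\in\sR$. Hence a constant-size PolyReLU subnetwork computes $x\mapsto x^2$ with no error and no domain restriction. This is the key simplification over the rational-network setting: squaring is an exact algebraic identity on all of $\sR$, so there is no approximation tolerance or boundedness condition to propagate.

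Next I would derive exact multiplication from squaring by polarization, $uv=\tfrac12\big((u+v)^2-u^2-v^2\big)$, which a constant-size PolyReLU subnetwork realizes by forming the affine combinations $u+v,\,u,\,v$ in one layer, squaring them in the next, and taking an affine readout. As before this is exact everywhere on $\sR$.

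With exact squaring and multiplication available, I would compute $x^n$ by repeated squaring. Writing $n=\sum_{j=0}^{k}b_j 2^j$ with $k=\floor{\log_2 n}$, I would generate the successive powers $x^{2^0},x^{2^1},\dots,x^{2^k}$ by $k$ applications of the square gadget, and then multiply together those $x^{2^j}$ for which $b_j=1$. This uses $O(\ln n)$ square gadgets and $O(\ln n)$ multiplication gadgets, each of constant size.

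The point needing care is the size accounting in a strictly layered (skip-free) architecture: the partial powers $x^{2^j}$ are produced at different depths but must be combined at the end, so they have to be carried forward by identity neurons, themselves realized exactly as $\ReLU(t)-\ReLU(-t)$, a degree-$1$ PolyReLU. Routing $O(\ln n)$ such values across $O(\ln n)$ layers contributes the quadratic factor and yields total size $O(\ln^2 n)$; this routing bookkeeping is the main obstacle, the rest being exact algebra. I would note in passing that an incremental variant maintaining a single running product and a single running square reduces the cost to $O(\ln n)$, which in particular also satisfies the stated bound.
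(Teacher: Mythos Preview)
Your argument is correct and follows the same high-level strategy as the paper---exact representation of monomials via Lemma~\ref{lem:degradation cases} followed by fast exponentiation---but the implementations differ in a way worth noting. The paper expands $n$ in base $r$ rather than base $2$: writing $n=\sum_{i=0}^{k}c_ir^i$ with $k=\lfloor\log_r n\rfloor$, it observes that each factor $x^{c_ir^i}$ is computed by a width-$1$ polynomial chain of depth $i+1$ (apply $t\mapsto t^r$ exactly $i$ times, then $t\mapsto t^{c_i}$ once), and sums these chain sizes to get $\sum_{i=0}^{k}(i+1)=O(\ln^2 n)$; the final conversion from polynomial to PolyReLU is again Lemma~\ref{lem:degradation cases}. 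Your route instead builds everything from the single primitive $t\mapsto t^2$ plus polarization, which is more elementary (it uses only that $r\ge 2$, never the full $r$-th power) and makes the multiplication and routing bookkeeping explicit, whereas the paper exploits the degree-$r$ activation directly and leaves the product step implicit. Both yield the stated $O(\ln^2 n)$ bound, and your incremental remark that a running product reduces this to $O(\ln n)$ is correct and in fact sharper than what either argument needs.
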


\begin{proof}[Proof of Lemma \ref{lem:polyrelu2poly}]
We first prove that $x^n$ can be represented exactly by a polynomial network $\hat{g}$ with $r$-th order polynomial activation and having size $O(\ln^2(n))$. Based on $\hat{g}$, we construct a PolyReLU network $g$ that satisfies the requirements. 

By expressing $n$ in base $r$, we have that
    \begin{equation}
        x^n = \prod_{i=0}^{k} x^{c_ir^i} =\prod_{i=0}^{k} \left( x^{c_i} \left(x^{r}\right)^i\right) , 
    \end{equation}
where $k = \lfloor \log_r n \rfloor$, $n = \sum_{i=0}^{k} c_ir^i$, and $c_i \in \{0, 1,2, \dots, r-1\}$. Each $x^{c_ir^i}$  can be represented by a polynomial network with $i+1$ layers and width 1. It follows that $x^n$ can be represented by a polynomial network of size
\begin{equation}
    \sum_{i=0}^{k} (i+1) = O(k^2) = O(\ln^2(n)).
\end{equation}
By Lemma \ref{lem:degradation cases}, we know that a PolyReLU activation can represent a polynomial activation. Hence, there exists a PolyReLU network $g$ with an $r$-th order activation and size $O(\ln^2(n))$ such that 
$$g(x) = x^n, \quad  \forall x \in \sR.$$
\end{proof}

With the above lemmas, we can now prove Theorem \ref{thm:universal approximation}.
\begin{proof}[Proof of Theorem \ref{thm:universal approximation}]
The proof is composed of two parts. 
We first approximate $f$ by local Taylor polynomials and continuous piece-wise linear functions and then represent these functions using PolyReLU networks, following \cite{yarotsky2017error, boulle2020rational}.

\textbf{Part 1.} Suppose $N$ is a positive integer. We begin by dividing $[-1,1]^d$ into a grid of $(2N+1)^d$ functions:
$$
\sum_{\vm} \phi_{\vm}(\vx) = 1,\quad  \phi_{\vm}(\vx) = \prod_{i=1}^d \varphi\left(3N \left( x_k - \frac{m_k}{N}\right)\right), \quad \forall \vx = (x_1,x_2,\dots, x_d) \in [-1,1]^d,
$$
where $\vm = (m_1,m_2, \dots, m_d) \in \{-N,-(N-1),\dots, 0,\dots, N\}^d$, and $\varphi$ is defined as
$$
\varphi(x)= \begin{cases}1, & |x|<1, \\ 0, & 2<|x|, \\ 2-|x|, & 1 \leq|x| \leq 2.\end{cases}
$$
This function has the following properties:
\begin{eqnarray} 
    \max_{x\in \sR} |\varphi(x)| = 1, \quad \max_{\vx \in [-1,1]^d} \|\phi_{\vm}(\vx)\|_{\infty} = 1, \label{eq:the property of phi}\\
    \operatorname{supp} \phi_{\vm} = \left\{\vx \biggl|\left\|\vx -\frac{\vm}{N} \right\|_{\infty} < \frac{2}{3N}\right\}, \forall \vm \in \{-N,-(N-1),\dots, N\}^d.
\end{eqnarray}

\textbf{Part 2.}  We use a degree-$(n-1)$ local Taylor approximation of the function $f$, defined as 
\begin{equation}
    f_N (\vx) = \sum_{\vm \in \{-N,\dots,N\}^d} \phi_{\vm}(\vx)P_{\vm}(\vx),
\end{equation}
where $P_{\vm}$ is the degree-$(n-1)$ Taylor polynomial of $f$ at $\vx = \vm /N$, \ie
\begin{equation}
    P_{\vm}(\vx) = \sum_{\vn :\|\vn\|_1 <n} \frac{1}{\vn!} D^{\vn}f\left(\frac{\vm}{N}\right)\left(\vx - \frac{\vm}{N}\right)^{\vn},
\end{equation}
with conventions $\vn! = \prod_{i=1}^d n_i!$ and $\left(\vx - \frac{\vm}{N}\right)^{\vn}=\prod_{i=1}^d \left(x_i - \frac{m_i}{N}\right)^{n_i}$. 

The approximation error between $f$ and $f_N$ can be bounded as follows
\begin{equation}
\begin{aligned}
    |f(\vx) -f_N(\vx)| & = \left| \sum_{\vm \in \{-N,\dots,N\}^d} \phi_{\vm}\left(f(\vx)-P_{\vm}(\vx)\right) \right| \\
    & \leq \sum_{\vm:\|x-\frac{\vm}{N}\|_{\infty}<\frac{2}{3N}} \left| f(\vx)-P_{\vm}(\vx)\right|\\
    & \leq 2^d \max_{\vm:\|x-\frac{\vm}{N}\|_{\infty}<\frac{2}{3N}} \left| f(\vx)-P_{\vm}(\vx)\right|\\
    & \leq \frac{2^d}{n!}\left(\frac{2d}{3N} \right)^n  \max_{\vn: \|\vn\|_1= n} \operatornamewithlimits{ess\ sup}_{\vx \in [-1,1]^d} \left\|D^{\vn}f(\vx)\right\|_{\infty}  \\
    & \leq \frac{2^d}{n!}\left(\frac{2d}{3N} \right)^n. \\
\end{aligned}
\end{equation}

The first inequality is because of the triangular inequality and Eq. (\ref{eq:the property of phi}). In the second inequality, we used the fact that $\forall x \in [-1,1]^d$ belongs to the support of at most $2^d$ functions $\phi_{\vm}$. The third inequality is a bound for the Taylor remainder and the fourth inequality uses the definition of $F_{n,d}$. Let 
\begin{equation}\label{eq:N v.s. epsilion}
    N = \left\lfloor \frac{2d}{3} \left(\frac{2^{d}}{n!\epsilon}\right)^{\frac{1}{n}}\right\rfloor +1,
\end{equation}
we have that 
\begin{equation}
    \max_{\vx \in [-1,1]^d} |f(\vx) -f_N(\vx)| < \epsilon.
\end{equation}

Next, we construct a PolyReLU network $g_N$ to represent $f_N$ exactly. Let $a_{\vm,\vn} = \frac{1}{\vn!} D^{\vn}f\left(\frac{\vm}{N}\right)$. Since $\|f\|_{\gW^{n,\infty} \left([-1,1]^d \right)} \leq 1$, $|a_{\vm,\vn} | \leq 1$ for any $\vm,\vn$, we rewrite $f_N$ as
\begin{equation}
    f_N (\vx) = \sum_{\vm \in \{-N,\dots,N\}^d} \sum_{\vn :\|\vn\|_1 <n} a_{\vm,\vn} \phi_{\vm}(\vx) \left(\vx - \frac{\vm}{N}\right)^{\vn}.
\end{equation}

Therefore, $f_N$ is composed of at most $d^n(2N+1)^d$ functions $\phi_{\vm}(\vx) \left(\vx - \frac{\vm}{N}\right)^{\vn}$. Since $\phi_{\vm}(\vx) = \prod_{i=1}^d \varphi\left(3N \left( x_k - \frac{m_k}{N}\right)\right)$ and each $\varphi\left(3N \left( x_k - \frac{m_k}{N}\right)\right)$ is a continuous piece-wise linear function, we can apply Corollary \ref{corollay:polyrelu to  piece-wise linear function}, which guarantees that there exists a PolyReLU network $\hat{\phi}_{\vm}$ of size $O(d)$ that can exactly represent $\phi_{\vm}$ on $\sR^d$, \ie $\hat{\phi}_{\vm} (\vx) =\phi_{\vm} (\vx), \forall \vx \in \sR^d$. For $\left(\vx - \frac{\vm}{N}\right)^{\vn}=\prod_{i=1}^d \left(x_i - \frac{m_i}{N}\right)^{n_i}$, by Lemma \ref{lem:polyrelu2poly}, we know that there exists a PolyReLU network $g_{\vm}$ of size at most $O(d\ln^2(n))$ such that $g_{\vm}(\vx)=\left(\vx - \frac{\vm}{N}\right)^{\vn}, \forall \vx \in \sR^d$. Combining these results, we can now construct a larger PolyReLU network $g_n$ as follows
\begin{equation}
    g_N (\vx) = \sum_{\vm \in \{-N,\dots,N\}^d} \sum_{\vn :\|\vn\|_1 <n} a_{\vm,\vn} \hat{\phi}_{\vm}(\vx) g_{\vm}(\vx),
\end{equation}
where the total size of the network is
$$O\left(d^n(2N+1)^d(d+d\ln^2(n))\right) = O(\epsilon^{-\frac{d}{n}}).$$ 
Here, we use Eq. (\ref{eq:N v.s. epsilion}) to determine the size bound in terms of the error tolerance $\epsilon$. Clearly, we have
\begin{equation}
    f_N (\vx) = g_N(\vx), \quad \forall \vx \in \sR^d.
\end{equation}
Hence, we conclude that
\begin{equation}
   \max_{\vx \in [-1,1]^d} |f(\vx) -g_N(\vx)= \max_{\vx \in [-1,1]^d} |f(\vx) -f_N(\vx)| < \epsilon.
\end{equation}

This completes the proof.
\end{proof}

\section{Discussion of the Optimal Approximation Rate}\label{sec: optimal approximation rate}
For convenience, we state Theorem 4.2 in \cite{devore1989optimal} in the following.
\begin{theorem}[Theorem 4.2 in \cite{devore1989optimal}]
Let $\gX$ be a Banach space $L_q$ on $\sR^d$, $1\leq q\leq \infty$. If $F_{n,d}^p=\{f \in \gX | \|f\|_{\gW^{n,p}} \leq 1\}, 1\leq p\leq q, n \in \sN $, then

\begin{equation}
    \sup_{f\in F_{n,d}^p} \inf_{\theta \in \sR^m} \|f-\gM(\theta)\|_q\geq C m^{-\frac{n}{d}},
\end{equation}
where $\gM$ be a mapping from $\sR^m$ into $\gX$ which associate with each $\theta \in \sR^m$ the element $\gM(\theta) \in \gX$, and $C$ is a constant. 
\end{theorem}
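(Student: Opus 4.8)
The plan is to prove this as a continuous nonlinear $n$-width lower bound via the Borsuk--Ulam theorem, following \cite{devore1989optimal}. A caveat on the hypotheses comes first: as literally written, with $\gM$ arbitrary and the error measured by $\sup_f\inf_\theta$, the bound fails, since $\sR^m$ and the infinite-dimensional compact convex ball $F_{n,d}^p$ have the same cardinality (the ball is even homeomorphic to the Hilbert cube), so one can drive $\inf_\theta\|f-\gM(\theta)\|_q$ to $0$ by a measurable, or even a continuous, surjection onto the ball. The genuine content of the theorem is that the parametrization is \emph{continuous}: there is a continuous selection $a:F_{n,d}^p\to\sR^m$, and the quantity bounded below is $\sup_{f\in F_{n,d}^p}\|f-\gM(a(f))\|_q$. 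I will prove it in this (correct) form.

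The first step is to build a rich, antipodally symmetric test family inside the ball. Fix a smooth bump $\psi\not\equiv 0$ supported in $[0,1]^d$. Partition $[-1,1]^d$ into $m+1$ disjoint subcubes of side $h\asymp (m+1)^{-1/d}$ centered at $\xi_0,\dots,\xi_m$, and set $\psi_i(\vx)=H\,\psi\!\left((\vx-\xi_i)/h\right)$, so the $\psi_i$ have pairwise disjoint supports. Disjointness gives, for any $c=(c_0,\dots,c_m)$, the identities $\bigl\|\sum_i c_i\psi_i\bigr\|_{\gW^{n,p}}=\beta\,\|c\|_{\ell^p}$ and $\bigl\|\sum_i c_i\psi_i\bigr\|_{q}=\gamma\,\|c\|_{\ell^q}$, with $\beta\asymp H\,h^{d/p-n}$ and $\gamma\asymp H\,h^{d/q}$. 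Choosing the height $H$ so that $\beta\sup_{\|c\|_2=1}\|c\|_{\ell^p}\le 1$ (hence $H\asymp h^{\,n-d/p}$ up to a power of $m$) guarantees that the full sphere $E=\{\sum_i c_i\psi_i:\|c\|_2=1\}$, homeomorphic to $S^m$ because the $\psi_i$ are linearly independent, lies inside $F_{n,d}^p$.

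Next I apply Borsuk--Ulam. The map $A:S^m\to\sR^m$, $A(c)=a\bigl(\sum_i c_i\psi_i\bigr)$, is continuous, and since it sends the $m$-sphere into $\sR^m$ there is an antipodal pair with $A(c^\ast)=A(-c^\ast)$. Writing $f^\ast=\sum_i c_i^\ast\psi_i$ and $y=\gM(A(c^\ast))$, both $f^\ast$ and $-f^\ast$ lie in $E\subseteq F_{n,d}^p$ and receive the same reconstruction $y$, so the antipodal trick yields
\begin{equation}
2\|f^\ast\|_{q}=\bigl\|(f^\ast-y)-(-f^\ast-y)\bigr\|_{q}\le \|f^\ast-y\|_{q}+\|{-f^\ast}-y\|_{q}\le 2\sup_{g\in F_{n,d}^p}\|g-\gM(a(g))\|_{q},
\end{equation}
and the width is bounded below by $\|f^\ast\|_{q}=\gamma\,\|c^\ast\|_{\ell^q}$.

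It remains to bound $\|c^\ast\|_{\ell^q}$ from below uniformly over the sphere and insert the scalings. Using $\|c\|_{\ell^q}\ge (m+1)^{-(1/2-1/q)_+}$ on $\|c\|_2=1$ together with $h\asymp(m+1)^{-1/d}$ and the value of $H$, the exponents collapse to exactly $(m+1)^{-n/d}\asymp m^{-n/d}$ in the regime $1\le p\le 2\le q\le\infty$, which already proves the theorem there. \textbf{The main obstacle is the sharp exponent when $p>2$ or $q=\infty$}, precisely the case $p=q=\infty$ used in the body: Borsuk--Ulam locates only \emph{some} antipodal collision, and at a spread-out $c^\ast$ the value $\|c^\ast\|_{\ell^\infty}$ can be as small as $(m+1)^{-1/2}$, costing a spurious $m^{-1/2}$ factor and yielding merely $m^{-n/d-1/2}$. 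To recover the optimal rate one must force the colliding function to retain the single-bump scale $\|f^\ast\|_{\infty}\asymp H\asymp m^{-n/d}$; following \cite{devore1989optimal}, this replaces the plain sphere argument by a refined topological device (an antipodal covering of the sign-cube $\{-1,+1\}^{m+1}$ in the spirit of the Lyusternik--Schnirelmann--Borsuk theorem, applied across adapted sub-families of the bumps), which is the delicate part. Translating error $\ge C m^{-n/d}$ into parameters via $m^{-n/d}\le\epsilon$ then recovers the $\Omega(\epsilon^{-d/n})$ count cited in the main text.
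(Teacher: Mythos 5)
Your attempt is judged on its own merits here, because the paper itself does not prove this statement: it is quoted verbatim from \cite{devore1989optimal} and used as a black box in Appendix B. Your framework is in fact the right one, and it is essentially DeVore--Howard--Micchelli's: the caveat that the literal $\sup\inf$ formulation is vacuous and that the genuine theorem requires a \emph{continuous} parameter selection $a$ is correct; the disjoint-bump family spanning an $(m+1)$-dimensional subspace is the correct test set; and Borsuk--Ulam applied to $c \mapsto a\bigl(\sum_i c_i\psi_i\bigr)$ is the correct topological tool. But there is a genuine gap precisely where it matters: in the case $p=q=\infty$ --- the only case the paper invokes, to get the $\Omega(\epsilon^{-d/n})$ parameter count --- your argument as written yields only $m^{-n/d-1/2}$, and you defer the repair to an unspecified ``refined topological device'' of Lyusternik--Schnirelmann--Borsuk type that you never construct. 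That deferral is both a missing step and a misdiagnosis of the difficulty.

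The spurious $m^{-1/2}$ is an artifact of normalizing the coefficients on the Euclidean sphere $\|c\|_2=1$, not a limitation of Borsuk--Ulam. The antipodal theorem is not confined to the round sphere: the boundary of any bounded symmetric convex neighborhood of the origin in $\sR^{m+1}$ is carried onto $S^m$ by an \emph{odd} radial homeomorphism, so any continuous map from it into $\sR^m$ collapses some antipodal pair. Hence you should run your own argument on the $\ell^q$-sphere $\{c:\|c\|_q=1\}$ --- equivalently on the $L_q$-sphere of the span of the bumps, which is exactly bounding the width below by the Bernstein width $b_m(F_{n,d}^p)_{L_q}$, the route taken in \cite{devore1989optimal}. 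Then $\|f^\ast\|_q=\gamma$ with no loss, the admissibility constraint becomes $\beta\,(m+1)^{1/p-1/q}\le 1$ via $\sup_{\|c\|_q=1}\|c\|_p=(m+1)^{1/p-1/q}$, and with $\beta\asymp H h^{d/p-n}$, $\gamma\asymp H h^{d/q}$, $h\asymp(m+1)^{-1/d}$ the exponents collapse to exactly $(m+1)^{-n/d}$ for \emph{every} $1\le p\le q\le\infty$, including $p=q=\infty$. Two smaller points: your own bookkeeping already shows the Euclidean normalization is sharp whenever $p\le 2$ (the two $\ell^2$-comparison losses cancel for any $q\le\infty$), so your stated obstacle ``when $p>2$ or $q=\infty$'' is half wrong --- only $p>2$ is problematic, though that regime does contain the case the paper needs; and your exact-equality claims $\bigl\|\sum_i c_i\psi_i\bigr\|_{\gW^{n,p}}=\beta\|c\|_{\ell^p}$ are fine for translated copies of a single bump under the $\max$-over-derivatives norm, since the maximizing multi-index is independent of $c$. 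With the $\ell^q$-renormalization substituted for the gestured-at covering argument, your proof is complete and coincides with the cited source's.
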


Particularly, let $q=p=\infty$ and $\gX= L_{\infty}[-1,-1]^d$, the above theorem tells us that the approximation error of the neural networks with $m$ parameters to approximate $F_{n,d}^{\infty}$, \ie $F_{d,n}$, is larger than $C m^{-\frac{n}{d}}$. Therefore, given error tolerance $\epsilon$, we have 
\begin{equation}
    \epsilon \geq C m^{-\frac{n}{d}}.
\end{equation}
It follows that 
\begin{equation}
    m \geq C^{\frac{d}{n}} \epsilon^{-\frac{d}{n}}.
\end{equation}
Hence, the total number of parameters required by neural networks to approximate functions in $F_{n,d}$ is $\Omega(\epsilon^{-\frac{d}{n}})$. Combining with Theorem \ref{thm:universal approximation}, we have that our PolyReLU networks achieve the optimal approximation rate in the context of Sobolev spaces.

\section{Activation Functions}\label{set: activation functions}
We provide definitions of several commonly used non-linear activation functions in Table \ref{table:activation functions}.

\begin{table}[ht]
\caption{Definition of activation functions.}
\label{table:activation functions}
\begin{center}
\renewcommand{\arraystretch}{1.8}
\begin{tabular}{l|c}
\toprule
Activation & Definition\\
\hline
ReLU \citep{nair2010rectified}  & ReLU $(x)=\max\{x, 0\}$ \\
\hline
ReLU$^2$ \citep{so2021searching}  & ReLU$^2$ $(x)=\max\{x, 0\}^2$ \\
\hline
ReLU6 \citep{krizhevsky2010convolutional}  & ReLU6 $(x)=\min(\max\{x, 0\}$,6 )\\
\hline
Leaky ReLU \citep{maas2013rectifier} &  LeakyReLU $(x)=\begin{cases}
        x, & \text{ if } x \geq 0 \\
        a x, & \text{ otherwise }
        \end{cases}$, $a \in (0,1)$ is a constant \\
\hline
RReLU \citep{xu2015empirical} &  RReLU$(x) =
        \begin{cases}
            x & \text{if } x \geq 0 \\
            ax & \text{ otherwise,}
        \end{cases} $\\
        & $a$ is randomly sampled from uniform distribution\\
\hline
Parametric ReLU
(PReLU)     &  PReLU $(x)=\begin{cases}
        x, & \text{ if } x \geq 0 \\
        a x, & \text{ otherwise ,}
        \end{cases}$\\
  \citep{he2015delving}      & $a$ is a learnable parameter\\
\hline
    Tanh &  Tanh$(x)=\frac{\exp(x) - \exp(-x)} {\exp(x) + \exp(-x)}$  \\
\hline
    Softplus \citep{glorot2011deep} & Softplus $(x) = \frac{1}{a} * \log(1 + \exp(a x))$,\\
    & $a$ is a constant (default 1.0)\\
\hline
    Mish \citep{misra2019mish}&  Mish $(x) = x * \text{Tanh}(\text{Softplus}(x))$\\
\hline
   Sigmoid  &   Sigmoid$(x) = \sigma(x) = \frac{1}{1 + \exp(-x)}$  \\
\hline
   SiLU(Swish) \citep{ramachandran2017searching} &  SiLU$(x) = x * \sigma(x)$  \\
\hline
    ELU \citep{clevert2015fast} &   ELU$(x) = \begin{cases}
        x, & \text{ if } x > 0\\
        a * (\exp(x) - 1), & \text{ if } x \leq 0,
        \end{cases}$ \\
        & $a$ is a constant (default 1.0)\\
\hline
    CELU \citep{barron2017continuously} &  CELU $(x) = \max(0,x) + \min(0, \alpha * (\exp(x/a) - 1))$, \\
    & $a$ is a constant (default 1.0)\\
\hline
    GELU \citep{hendrycks2016gaussian} &   GELU$(x) = x * \Phi(x)$, \\
    &  $\Phi(x)$ is CDF for Gaussian distribution \\
\hline
    GLU \citep{dauphin2017language} & GLU$(x)= \sigma(xW) \otimes (xV)$\\
\hline
    SwiGLU \citep{shazeer2020glu} & SwiGLU$(x)= \operatorname{SiLU}(xW) \otimes (xV)$,\\
    & $W,V$ are learnable parameters \\
\hline 
Poly & $\Poly(x)=\sum_{i=0}^{r}a_ix^i$, $a_i,i \in [r]$ are learnable parameters \\

\bottomrule
\end{tabular}
\renewcommand{\arraystretch}{1.0}
\end{center}
\end{table}

\newpage
\section{PyTorch Implementation of PolyCom}
PyTorch implementations of PolyReLU and PolyNorm are provided in the following. 
\begin{algorithm}[H]
\caption{PyTorch-Style Implementation of PolyReLU}
\label{alg:torch_polyrelu}
\algcomment{\fontsize{7.2pt}{0em}\selectfont 
}
\definecolor{codeblue}{rgb}{0.25,0.5,0.5}
\lstset{
  backgroundcolor=\color{white},
  basicstyle=\fontsize{7.2pt}{7.2pt}\ttfamily\selectfont,
  columns=fullflexible,
  breaklines=true,
  captionpos=b,
  commentstyle=\fontsize{7.2pt}{7.2pt}\color{codeblue},
  keywordstyle=\fontsize{7.2pt}{7.2pt},
}

\begin{lstlisting}[language=Python,
    commentstyle=\color{green!40!black},
    keywordstyle=\color{magenta}]
import torch
from torch.utils.checkpoint import checkpoint
import torch.nn.functional as F

def _poly(x, weight, bias, order=3):
    return sum(weight[i] * (x ** (i+1)) for i in range(order)) + bias

class PolyReLU(torch.nn.Module):
    def __init__(self):
        super(PolyReLU, self).__init__()
        self.weight = torch.nn.Parameter(torch.ones(3) / 3)
        self.bias = torch.nn.Parameter(torch.zeros(1))

    def forward(self, x, checkpointing=True):
        x = F.relu(x)
        if checkpointing:
            return checkpoint(_poly, x, self.weight, self.bias, use_reentrant=False)
        return _poly(x, self.weight, self.bias)
\end{lstlisting}
\end{algorithm}

\begin{algorithm}[H]
\caption{PyTorch-Style Implementation of PolyNorm}
\label{alg:torch_polynorm}
\algcomment{\fontsize{7.2pt}{0em}\selectfont 
}
\definecolor{codeblue}{rgb}{0.25,0.5,0.5}
\lstset{
  backgroundcolor=\color{white},
  basicstyle=\fontsize{7.2pt}{7.2pt}\ttfamily\selectfont,
  columns=fullflexible,
  breaklines=true,
  captionpos=b,
  commentstyle=\fontsize{7.2pt}{7.2pt}\color{codeblue},
  keywordstyle=\fontsize{7.2pt}{7.2pt},
}

\begin{lstlisting}[language=Python,
    commentstyle=\color{green!40!black},
    keywordstyle=\color{magenta}]
def _norm(x, eps=1e-6):
    return x * torch.rsqrt(x.pow(2).mean(-1, keepdim=True) + eps)

def _poly_norm(x, weight, bias, order=3):
    return sum(weight[i] * _norm(x ** (i+1)) for i in range(order)) + bias
    
class PolyNorm(torch.nn.Module):
    def __init__(self):
        super(PolyNorm, self).__init__()
        self.weight = torch.nn.Parameter(torch.ones(3) / 3)
        self.bias = torch.nn.Parameter(torch.zeros(1))

    def forward(self, x, checkpointing=True):
        if checkpointing:
            return checkpoint(_poly_norm, x, self.weight, self.bias, use_reentrant=False)
        return _poly_norm(x, self.weight, self.bias)
\end{lstlisting}
\end{algorithm}

\section{Experimental Details}\label{sec: experimental details}

\subsection{Architecture}
Table \ref{table:model architecture} outlines the model architecture used for the 1B dense model. To ensure comparable numbers of training parameters across different activation functions, we adjust the intermediate sizes accordingly. For SwiGLU, the intermediate size is set to 5504, while for other activation functions, it is set to 8256. 

Table \ref{table: model architecture of moe} outlines the model architecture used for the MoE models. Similarly, the intermediate size for SwiGLU is set to 1024, while for other activation functions, it is set to 1536. 

\begin{table}[ht]
\caption{Model architecture of the 1B dense model.}
\label{table:model architecture}
\begin{center}
\begin{tabular}{cccccc}
\toprule
Params & Hidden Size & Context Length & Intermediate Size & Attention Heads & Hidden Layers \\
\midrule
1.3B  & 2048 & 4096 & 5504/8256& 16& 24 \\
\bottomrule
\end{tabular}
\end{center}
\end{table}

\begin{table}[ht]
\caption{Model architecture of MoE model.}
\label{table: model architecture of moe}
\begin{center}
\begin{tabular}{ccccc}
\toprule
Activate Params & Total Params & Hidden Size & Intermediate Size & Attention Heads  \\
\midrule
1.3B  & 6.9B & 2048 &  1024/1536 & 16 \\
\toprule
Hidden Layers   & Exports &Active Exports &Context Length & Weight Tying  \\
\midrule
16   & 64 & 8 & 4096 & no   \\
\bottomrule
\end{tabular}
\end{center}
\end{table}

\subsection{Hyperparameters}
In Table \ref{table: hyperparameters}, we list the hyperparameters that we use by default at training time for all our experiments for the 1B dense model and MoE-1B-7B, unless stated otherwise.

\begin{table}[ht]
\caption{Pretraining hyperparameters for the 1B dense model and MoE-1B-7B.}
\label{table: hyperparameters}
\begin{center}
\begin{tabular}{l|cc}
\toprule
                   & 1B dense model & MoE-1B-7B    \\
\midrule
Optimizer          & AdamW          & AdamW        \\
Learning Rate (LR) & 3E-4       & 4E-4     \\
Minimum LR         & 3E-5       & 5E-5     \\
LR Schedule        & cosine         & cosine       \\
Weight Decay       & 0.1            & 0.1          \\
$\beta_1$          & 0.9            & 0.9          \\
$\beta_2$          & 0.95           & 0.95         \\
Gradient Clipping  & 1              & 1            \\
Warmup Tokens      & 620,000,000      &    -          \\
Warmup Steps       &   -             & 2000         \\
Init Distribution  & normal         & trunc normal \\
Init std           & $1/ \sqrt{2.5d}$       & $1/ \sqrt{2.5d}$     \\
Init Truncation     &   -             & 3$\times$ std        \\
Load Balancing Loss Weight & -& 0.01 \\
Router z-loss Weight & -  & 0.001 \\
\bottomrule
\end{tabular}
\end{center}
\end{table}

\subsection{Definition of Effective Rank}\label{sec:definition of effective rank}
We adopt the concept of effective rank from \cite{roy2007effective} to measure the effective dimensionality of a matrix. Given a matrix $A$ with Singular Value Decomposition (SVD) $A = U\Sigma V^\top$, where $\Sigma$ is a diagonal matrix containing singular values $\sigma_1 \geq \sigma_2\geq \dots \geq \sigma_n\geq 0$. we define the singular value distribution as $p_i = \sigma_i/\sum_{j=0}^n \sigma_j, i \in [n]$. The effective rank of $A$ is then given by
\begin{equation}
    \operatorname{Erank}(A) = \exp{\left(-\sum_{i=0}^n p_i \ln p_i\right)}.
\end{equation}

\section{Computational complexity analysis}
\label{sec:computation_and_memory}
For the sake of simplicity, we only calculate the computational complexity in one-layer Feed-Forward Networks
(FFN) since activation only. Support input tensor of FFN is $x \in \mathbb{R}^{B \times S \times H}$, where $B$, $L$, and $H$ are the batch size, length of the sequence, and hidden size, respectively. Roughly, the relationship between computational FLOPs and model parameters can be regarded as proportional 
\footnote{\url{https://blog.eleuther.ai/transformer-math/}}. Therefore, we can estimate the proportion of the computational cost incurred by the activation function calculations within the total computational cost of the FFN matrix computations ($24BSH^2$). The FLOPs ratio is calculated as
$$\text{FLOPs ratio} = \frac{\text{FLOPs for activation}}{24BSH^2}.$$

It is important to note that the overhead and proportion often vary for different model sizes, so we provide the corresponding formulas directly and take $H=1024$, $B=4$ (each device), $S=4096$, using BF16 precision as an example. For PolyReLU and PolyNorm, we use the 3-order default setting. The results are summarized in the following tables:

\begin{table}[h]
\caption{Comparison of computational complexity for different activation functions \textbf{without gradient checkpointing}.}
\label{table: computational complexity}
\centering
\setlength{\tabcolsep}{5pt}
\begin{tabular}{c|cccc}
\toprule
\textbf{Method} & \textbf{Intermediate Size} & \textbf{FLOPs for activation} & \textbf{FLOPs ratio}& \textbf{Memory Overhead} \\ 
\midrule
ReLU & $4H$ & $4BSH$ & $\frac{1}{6H} = 0.016\%$ & $4BSH=128MB$  \\ 
GELU & $4H$ & $72BSH$ & $\frac{3}{H} = 0.29\%$ & $10BSH=320MB$ \\
SwiGLU & $\frac{8}{3}H$ & $\frac{112}{3}BSH$ & $\frac{14}{9H} = 0.15\%$ & $8BSH=256MB$ \\ 
ReLU\textsuperscript{2} & $4H$ & $8BSH$ & $\frac{1}{3H} = 0.032\%$ & $8BSH=256MB$ \\ 
PolyNorm & $4H$ & $72BSH$ & $\frac{3}{H} = 0.29\%$ & $12BSH=384MB$\\ 
PolyReLU & $4H$ & $40BSH$ & $\frac{5}{3H} = 0.16\%$& $8BSH=256MB$ \\
\bottomrule
\end{tabular}

\end{table}

\begin{table}[h]
\caption{Comparison of computational complexity for different activation functions \textbf{with gradient checkpointing}.}
\label{table: computational complexity without gradient checkpointing}
\centering
\setlength{\tabcolsep}{5pt}
\begin{tabular}{c|cccc}
\toprule
\textbf{Method} & \textbf{Intermediate Size} & \textbf{FLOPs for activation} & \textbf{FLOPs ratio}& \textbf{Memory Overhead} \\ 
\midrule
ReLU & $4H$ & $8BSH$ & $\frac{1}{3H} = 0.033\%$ & 0  \\ 
GELU & $4H$ & $144BSH$ & $\frac{6}{H} = 0.59\%$ & 0 \\
SwiGLU & $\frac{8}{3}H$ & $\frac{224}{3}BSH$ & $\frac{28}{9H} = 0.30\%$ & 0 \\ 
ReLU\textsuperscript{2} & $4H$ & $16BSH$ & $\frac{2}{3H} = 0.065\%$ & 0 \\ 
PolyNorm & $4H$ & $144BSH$ & $\frac{6}{H} = 0.59\%$ & 0\\ 
PolyReLU & $4H$ & $80BSH$ & $\frac{10}{3H} = 0.33\%$& 0 \\
\bottomrule
\end{tabular}

\end{table}

We assume that the scale of the input tensor is set to $[-1, 1]$. In this case, the FLOPs for both tanh and exp are approximately 10 each.
For a fair comparison, the intermediate size of models with SwiGLU activations is set to 8/3H to keep the overall numbers of parameters constant. 

In practice, we utilized gradient checkpointing \footnote{\url{https://pytorch.org/docs/stable/checkpoint.html}}  to reduce the additional memory overhead to 0. While this may introduce a certain computational overhead, given the overall modest computational cost of the activation functions, the overall increase in GPU memory and computational cost is quite small.

\section{Scaling Curves}

\begin{figure}[th]
\centering
\includegraphics[width=0.8\linewidth]{./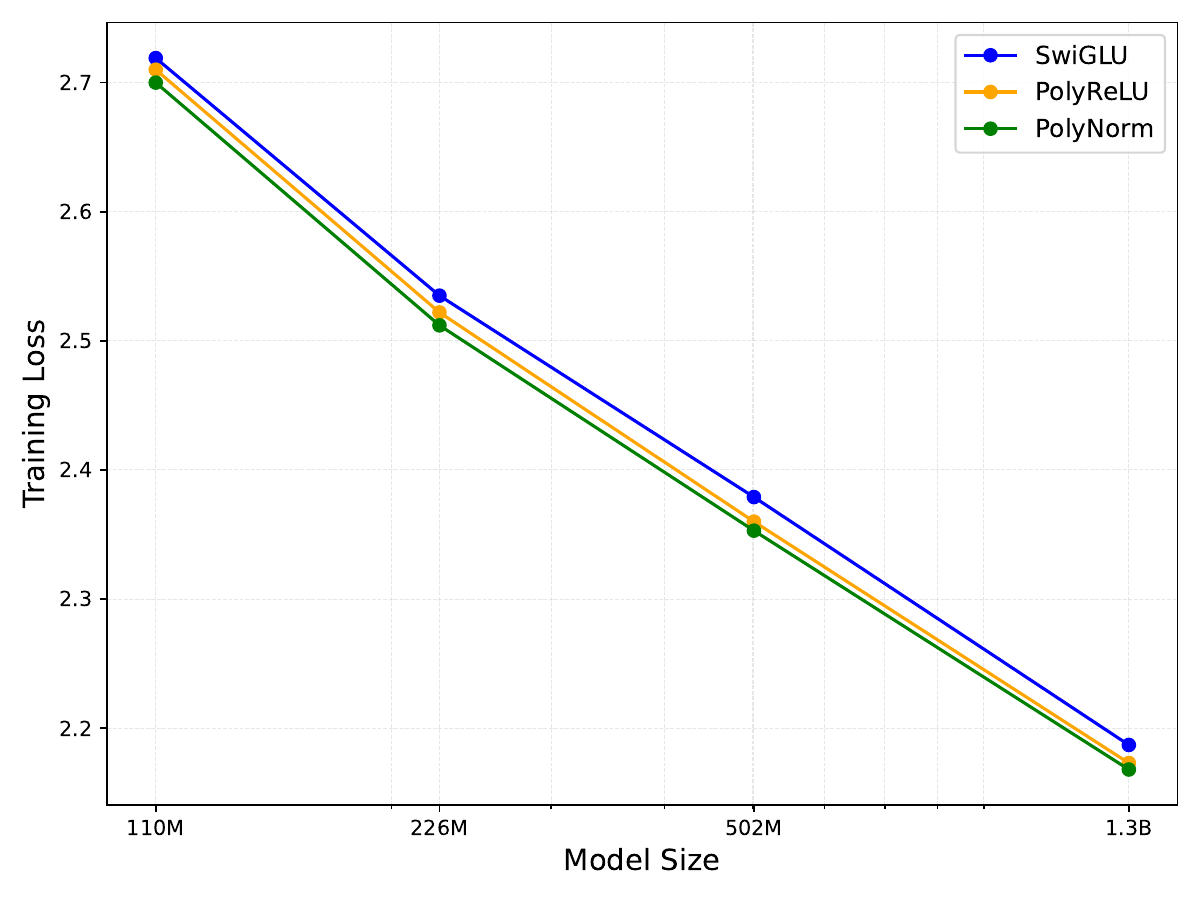}
\caption{Scaling curves of models with different activation functions.}
\label{fig: scaling curve}
\end{figure}

In Figure \ref{fig: scaling curve}, we present the training loss scaling curves for dense models utilizing the activation functions SwiGLU, PolyReLU, and PolyNorm. As illustrated in the figure, both PolyReLU and PolyNorm consistently outperform SwiGLU across model sizes ranging from 110M to 1.3B parameters.

The model sizes used for the scaling law experiments are detailed in Table \ref{table: model sizes for scaling law}, and all models employ the hyperparameters specified for 1B dense models, as listed in Table \ref{table: hyperparameters}. Models with 110M, 226M, and 502M parameters were trained on 200B tokens.

\begin{table}[h]
\caption{Model sizes for scaling laws experiments.}
\label{table: model sizes for scaling law}
\begin{center}
\begin{tabular}{cccccc}
\toprule
Params & Hidden Size & Context Length & Intermediate Size & Attention Heads & Hidden Layers \\
\midrule
110M   & 768         & 2048           & 2048/3072         & 16              & 12            \\
226M   & 1024        & 2048           & 2560/3840         & 16              & 16            \\
502M   & 1536        & 2048           & 4096/6144         & 16              & 16            \\
1.3B   & 2048        & 4096           & 5504/8256         & 16              & 24  \\
\bottomrule
\end{tabular}
\end{center}
\end{table}

\newpage
\section{Experiments on Vision} \label{sec:vision}
To evaluate the effectiveness of PolyCom beyond language modeling, we trained ResNet50 \citep{resnet} on ImageNet-1K \citep{deng2009imagenet} following the settings of timm \citep{rw2019timm}. For comparison, we replaced the ReLU activation in ResNet50 with PolyNorm and reported the training loss and top-1/top-5 accuracy on the evaluation set, as shown in Figure~\ref{fig:resnet}. The results demonstrate that PolyNorm outperforms ReLU by a significant margin in terms of training loss, top-1 accuracy, and top-5 accuracy. Specifically, PolyNorm achieves a lower training loss of \textbf{2.026} compared to ReLU's 2.121, and improves top-1 and top-5 accuracy to \textbf{75.117\%} and \textbf{92.099\%}, surpassing ReLU by +0.204 and +0.068, respectively.

\begin{figure}[th]
\centering
\includegraphics[width=\linewidth]{./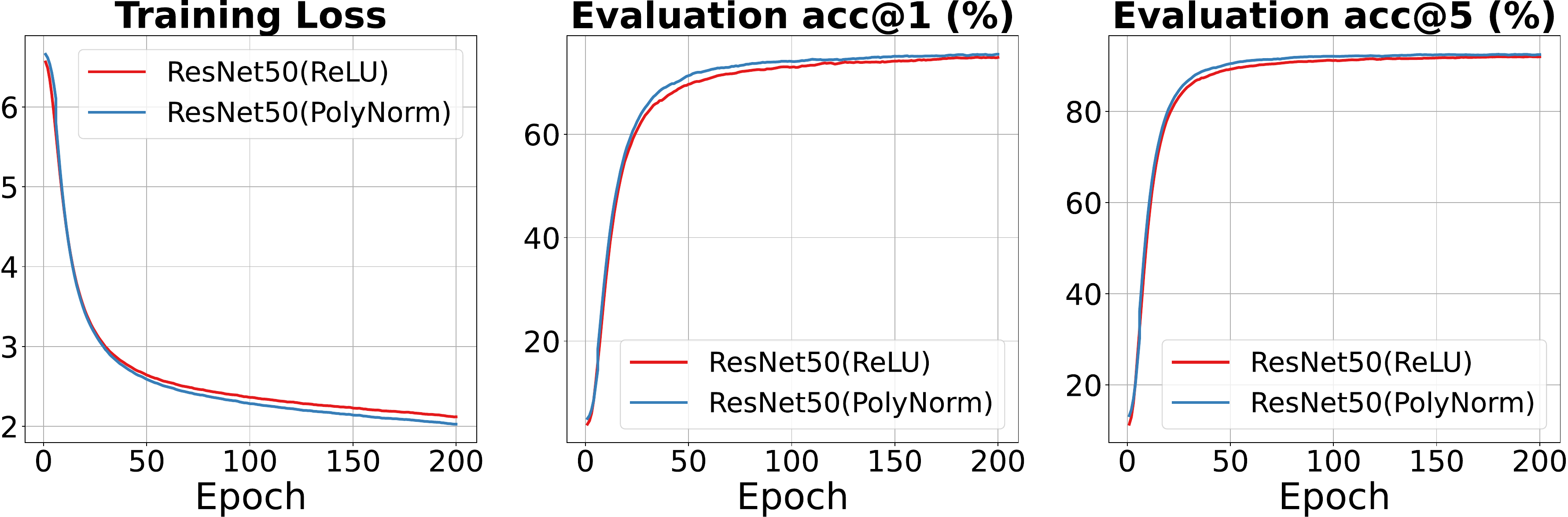}
\caption{Training loss and evaluation accuracy on ImageNet-1K for ResNet50 models with ReLU and PolyNorm activations. PolyNorm achieves lower training loss and higher top-1/top-5 accuracy, demonstrating improved performance.
}
\label{fig:resnet}
\end{figure}

\newpage
\section{Additional Results on Dense Model}\label{sec:additional results on dense model}

More detailed results from our ablation studies are shown in Figures \ref{fig:more detail ablation for different order}, \ref{fig:more detail ablation for different polynomial composition}, and \ref{fig:more detail ablation different variants of ReLU}. These figures illustrate the training loss, validation loss, and validation perplexity (PPL) for the 1B dense model under different configurations.

The results of the 1B dense models trained on 400 billion tokens are presented in Figure \ref{fig: 1B dense model with 400B tokens}. As shown in the figure, models employing PolyReLU and PolyNorm consistently achieve significantly better performance compared to SwiGLU.

\begin{figure}[th] 
\centering
\includegraphics[width=\linewidth]{./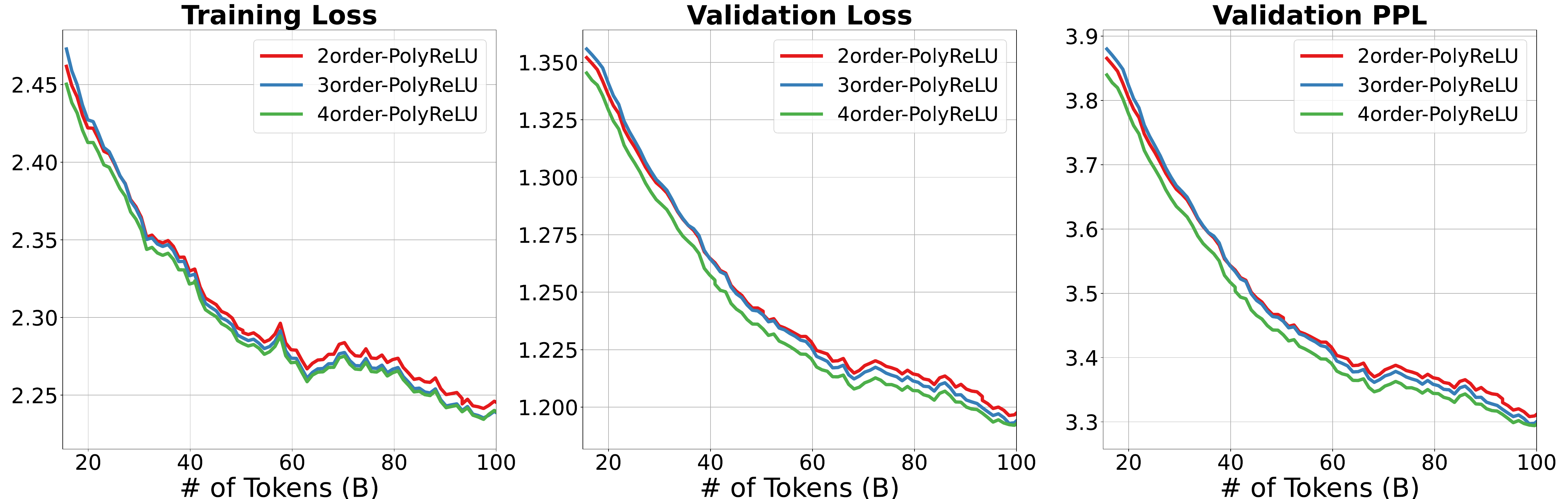}
\caption{training loss, validation loss, and validation perplexity (PPL) for the 1B dense model with different orders of PolyReLU activation functions.
}
\label{fig:more detail ablation for different order}
\end{figure}

\begin{figure}[th]
\centering
\includegraphics[width=\linewidth]{./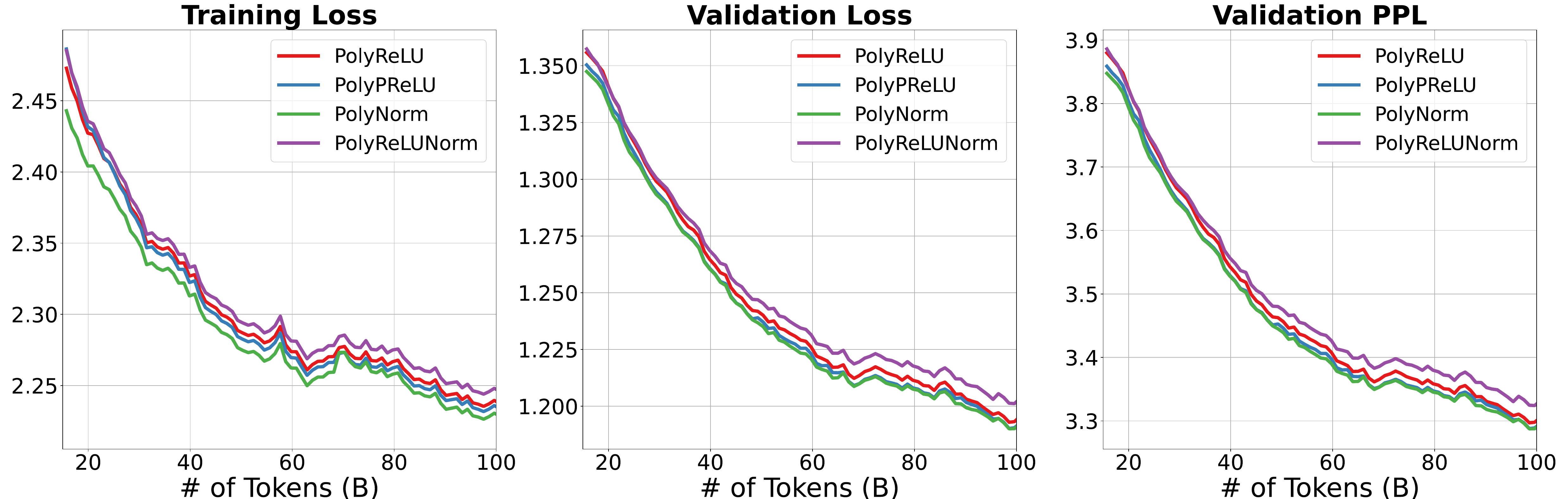}
\caption{training loss, validation loss, and validation perplexity (PPL) for the 1B dense model with different polynomial compositions.
}
\label{fig:more detail ablation for different polynomial composition}
\end{figure}

\begin{figure}[th]
\centering
\includegraphics[width=\linewidth]{./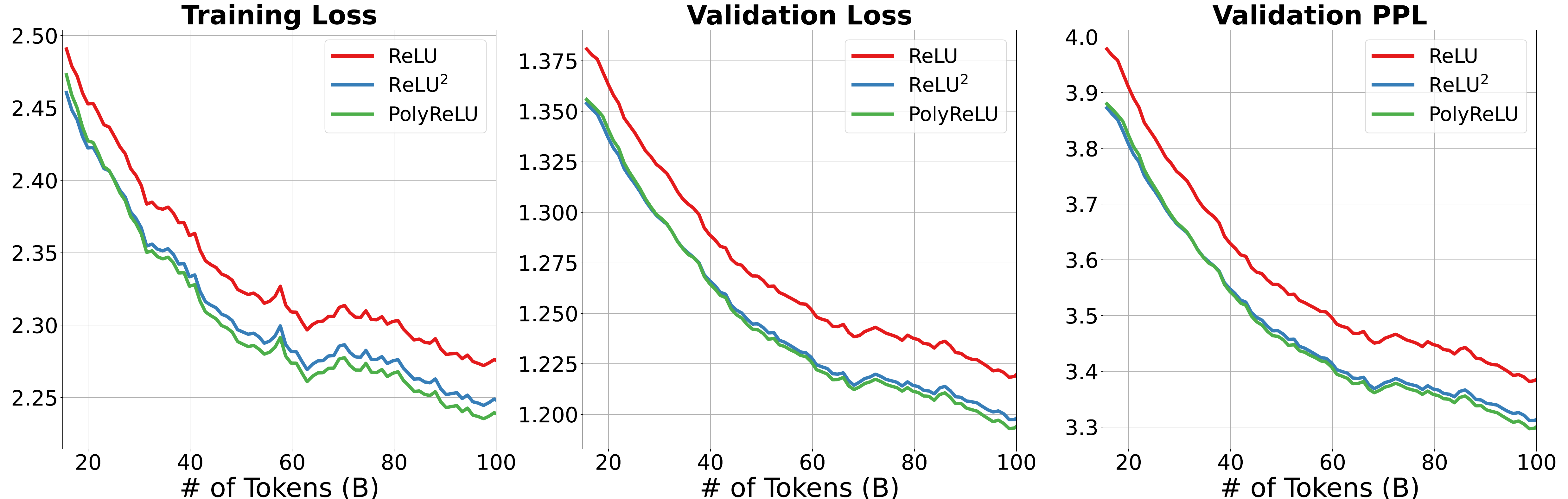}
\caption{Training loss, validation loss, and validation perplexity (PPL) for the 1B dense model with different variants of ReLU activation functions.
}
\label{fig:more detail ablation different variants of ReLU}
\end{figure}

\begin{figure}[th]
\centering
\includegraphics[width=\linewidth]{./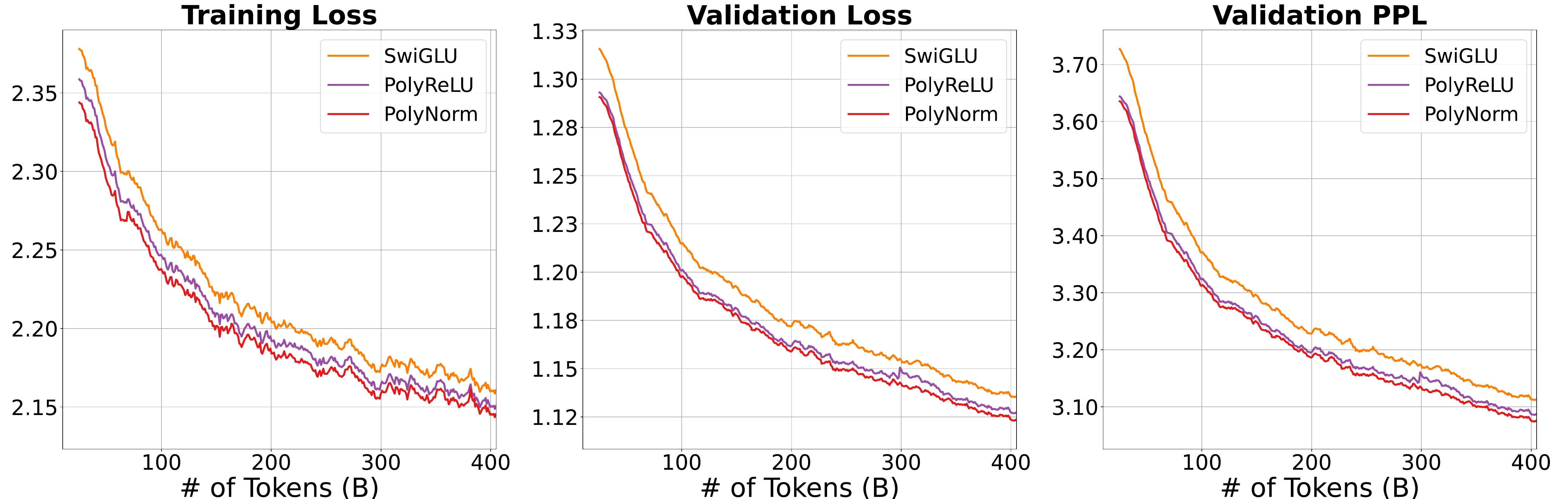}
\caption{Training loss, validation loss, and validation perplexity (PPL) for the 1B dense model with 400 billion training tokens.}
\label{fig: 1B dense model with 400B tokens}
\end{figure}

\section{Additional Results on MoE model} \label{sec:additional resutls on moe model}
More results for the MoE model are provided in Figure \ref{fig:additional results for moe model}, showcasing validation losses and downstream evaluations after 200 billion training tokens. The comparison highlights models with different activation functions, such as SwiGLU and PolyNorm. As shown, models with PolyNorm exhibit lower training and validation losses, along with superior downstream performance.

\begin{figure}[h]
\centering
\includegraphics[width=0.98\linewidth]{./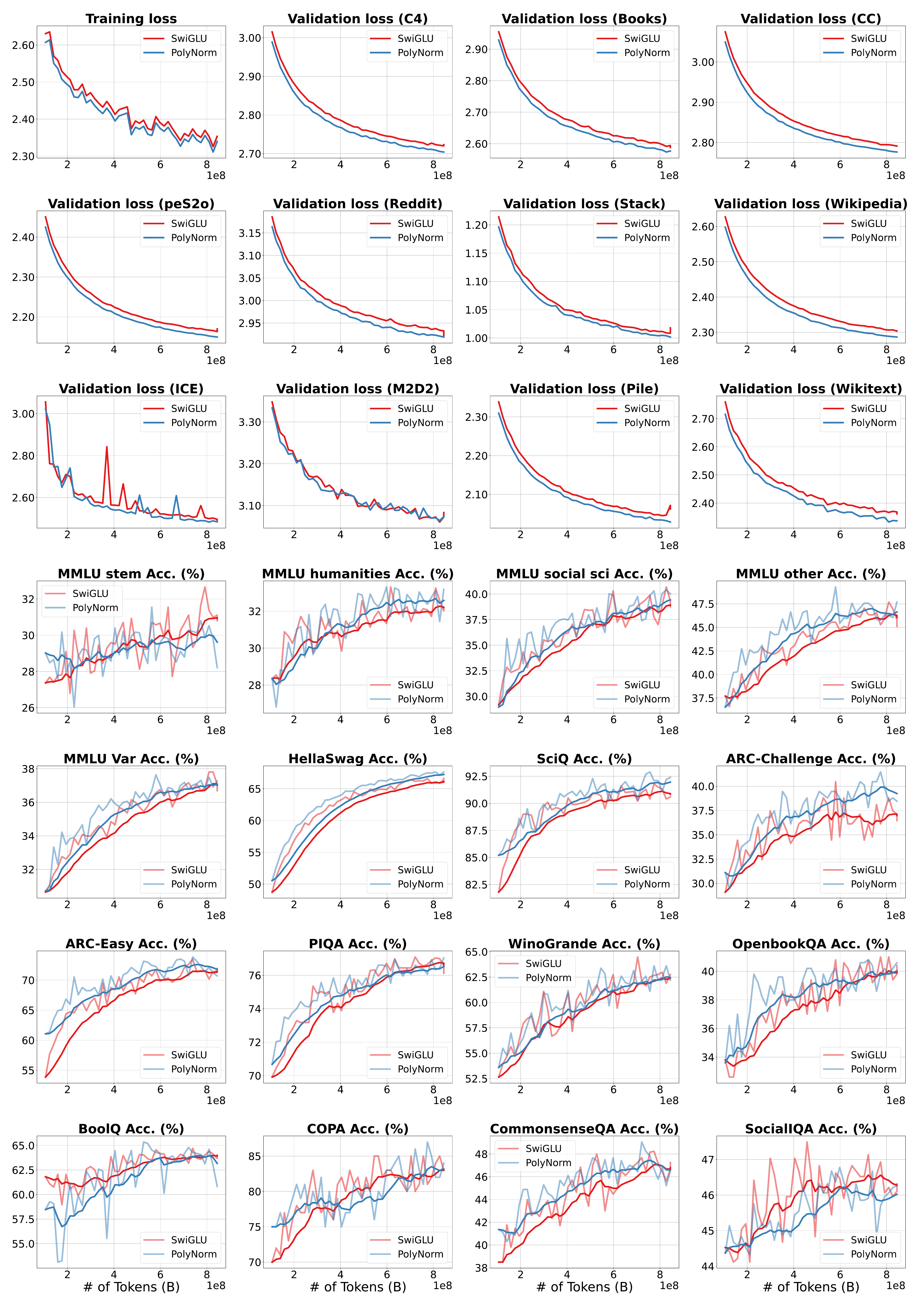}
\caption{Validation loss and downstream evaluations for MoE models with 200 billion training tokens, comparing SwiGLU and PolyNorm activation functions. PolyNorm shows superior performance in terms of lower loss and better downstream results.}
\label{fig:additional results for moe model}
\end{figure}

\end{document}